\documentclass{article}

\usepackage{microtype}
\usepackage{graphicx}
\usepackage{subcaption}
\usepackage{booktabs} 

\usepackage{tabularx}
\usepackage{placeins}

\usepackage{array} 

\newcolumntype{L}{>{\raggedright\arraybackslash}X}
\newcolumntype{C}{>{\centering\arraybackslash}X}
\newcolumntype{R}{>{\raggedleft\arraybackslash}X}

\usepackage{multirow}

\usepackage{hyperref}

\usepackage[preprint]{icml2026}

\usepackage{amsmath}
\usepackage{amssymb}
\usepackage{mathtools}
\usepackage{amsthm}
\usepackage{comment}

\usepackage[capitalize,noabbrev]{cleveref}

\theoremstyle{plain}
\newtheorem{theorem}{Theorem}[section]

\theoremstyle{definition}

\theoremstyle{remark}

\usepackage[textsize=tiny]{todonotes}

\newcommand{\draft}[1]{}

\icmltitlerunning{Submission and Formatting Instructions for ICML 2026}

\begin{document}

\twocolumn[
  \icmltitle{Spectra: Rethinking Optimizers for LLMs Under Spectral Anisotropy}



  \icmlsetsymbol{equal}{*}

  \begin{icmlauthorlist}
    \icmlauthor{Zhendong Huang}{equal,fdu}
    \icmlauthor{Hengjie Cao}{equal,fdu}
    \icmlauthor{Fang Dong}{fdu}
    \icmlauthor{Ruijun Huang}{fdu}
    \icmlauthor{Mengyi Chen}{fdu}
    \icmlauthor{Yifeng Yang}{fdu}
    \icmlauthor{Xin Zhang}{fdu}
    \icmlauthor{Anrui Chen}{fdu}
    \icmlauthor{Mingzhi Dong}{bath}
    \icmlauthor{Yujiang Wang}{oxford}
    \icmlauthor{Jinlong Hou}{sii}
    \icmlauthor{Qin Lv}{boulder}
    \icmlauthor{Robert P. Dick}{michigan}
    \icmlauthor{Yuan Cheng}{sii}
    \icmlauthor{Fan Yang}{fdu}
    \icmlauthor{Tun Lu}{fdu}
    \icmlauthor{Li Shang}{fdu}
  \end{icmlauthorlist}

  \icmlaffiliation{fdu}{Fudan University, Shanghai, China}
  \icmlaffiliation{bath}{University of Bath, Bath, United Kingdom}
  \icmlaffiliation{oxford}{Oxford Suzhou Centre for Advanced Research, Suzhou, China}
  \icmlaffiliation{sii}{Shanghai Innovation Institute, Shanghai, China}
  \icmlaffiliation{boulder}{Department of Computer Science, University of Colorado Boulder, Colorado, USA}
  \icmlaffiliation{michigan}{Department of Electrical Engineering and Computer Science, University of Michigan}

  \icmlcorrespondingauthor{Li Shang}{lishang@fudan.edu.cn}

  \icmlkeywords{Machine Learning, ICML}

  \vskip 0.3in
]



\printAffiliationsAndNotice{}  

\begin{abstract}
Gradient signals in LLM training are highly anisotropic: recurrent linguistic structure concentrates energy into a small set of dominant spectral directions, while context-specific information resides in a long tail. We show that this spike–tail separation persists throughout training, with the spike occupying only about $1.5\%$ of directions yet dominating optimizer statistics. This dominance suppresses tail learning by contracting tail updates through second-moment normalization and tightening the globally stable learning-rate bound.
Motivated by this analysis, we propose \textit{Spectra}, a spike-aware optimizer that suppresses the dominant low-rank spike subspace without amplifying the noise-sensitive spectral tail. Spectra tracks the spike subspace via cached, warm-started power iteration and applies low-rank spectral shaping with negligible overhead and substantially reduced optimizer-state memory.
On LLaMA3-8B trained on 50B tokens, Spectra reaches the same target loss $30\%$ faster than AdamW, reduces per-step end-to-end overhead by $0.7\%$, cutting optimizer-state memory by $49.25\%$, and improves average downstream accuracy by $1.62\%$.
Compared to Muon, Spectra is $5.1\times$ faster in optimizer processing time, achieves a lower final loss, and improves average accuracy by $0.66\%$. 
\end{abstract}

\section{Introduction}

Training on natural language corpora produces a highly imbalanced learning signal: grammatical and functional patterns recur ubiquitously across data, whereas semantically rich world knowledge is distributed over a vast long tail of rare and sparsely sampled events~\cite{piantadosi2014zipf,linders2023zipf,mikhaylovskiy2025zipf}. This imbalance induces strong directional correlations in representation space, resulting in highly anisotropic contextual embeddings~\cite{arora2017simple,mu2017all, ethayarajh2019contextual, li2020sentence, timkey2021all}. Consequently, gradients associated with common linguistic structure are repeatedly reinforced during training, while gradients corresponding to long-tail semantic content remain weaker, and more intermittent~\cite{kandpal2023large}.

This work aims to characterize the anisotropic structure of gradient signals in LLM training and translate it into optimizer design principles.
Our analysis is grounded in a spectral perspective because the imbalance is \emph{directional rather than element-wise}: individual coordinates entangle multiple latent factors and obscure independent learning modes, whereas spectral analysis reveals how skewed signals concentrate into a small set of correlated directions~\cite{cao2025metis,ethayarajh2019contextual,timkey2021all}.
Our analysis yields four key observations:

\noindent\textbf{Observation 1: A common-structure low-rank spike with a smooth semantic tail.}
The gradient spectrum exhibits a pronounced low-rank spike: roughly the top 1.5\% directions carry a disproportionate fraction of gradient energy and are separated from the tail spectrum by one to two orders of magnitude.
This anisotropy phenomenon is consistent across model scales, modules, and training stages, and admits a two-region semantic correspondence: the spike is primarily driven by common linguistic structure, while the tail encodes finer, context-specific semantic variations.

\noindent\textbf{Observation 2: Spike updating suppresses long-tail learning.}
Spike directions dominate AdamW’s~\cite{kingma2014adam} second-moment accumulation, so element-wise normalization is effectively set by the spike subspace and contracts tail update magnitudes.
Moreover, spike-dominated gradient variance bounds the optimal learning rate, imposing a conservative global step size that further limits progress in long-tail directions.

\noindent\textbf{Observation 3: Smaller singular directions carry sparser semantics and higher statistical relative variance.}
Along the spectral tail, semantic signals become increasingly sparse and intermittent: only a diminishing fraction of samples yield non-negligible projections onto smaller-singular directions.
Accordingly, their relative variance rises sharply, meaning stochastic fluctuations dominate as singular values decrease.
As a result, updates in these small-singular directions are increasingly unstable and easily drowned out.

\noindent\textbf{Observation 4: Numerical variance further destabilizes the spectral tail under iterative updates.}
Spectrum-aware optimizers, such as Muon~\cite{jordan6muon}, often implement spectral processing via iterative routines, such as Newton--Schulz iteration~\cite{higham1997stable}, which introduce numerical perturbations that concentrate in small-singular components, amplifying tail disturbances; aggressive tail equalization further worsens this while adding substantial computation.

\noindent \textbf{Spectra: design and properties.}
Our analysis motivates a clear design principle:
\emph{suppress the dominant low-rank spike subspace while avoiding aggressive amplification of the fragile spectral tail}.
Guided by this principle, \textit{Spectra} is designed with the following properties:

\noindent\emph{Efficiency.} Spectra tracks only the low-dimensional spike subspace using intermittently updated, warm-started power iteration and operates exclusively on this fixed small-rank component. As a result, it incurs low computational overhead and avoids storing per-parameter second-order statistics, substantially reducing optimizer-state memory.

\noindent\emph{Optimization.} By selectively attenuating spike-dominated updates, Spectra prevents common features from dominating optimization dynamics and avoids amplifying noise-dominated spectral components. This relaxes the spike-induced gradient-variance constraint on stable learning rates, widening the effective learning-rate range and accelerating convergence.

\noindent \emph{Parallelism.} Low-rank spike subspace estimation is naturally distributed-friendly: power iteration can be implemented via local GEMMs with only lightweight collectives on low-rank quantities. This makes Spectra well suited for large-scale parallel and distributed training without requiring full-gradient synchronization.

We evaluate \textit{Spectra} on LLaMA3-8B trained on 50B tokens, comparing against AdamW and Muon.

\noindent\textbf{Compared to AdamW}, Spectra reaches the same target loss $30\%$ faster in wall-clock time, reduces optimizer overhead (–$0.7\%$ end-to-end cost, –$49.25\%$ optimizer state memory), and improves average downstream accuracy by $1.62\%$. These gains stem from suppressing the dominant common-structure spike that hinders long-tail semantic learning and avoiding dense per-parameter second-moment estimation.

\noindent\textbf{Compared to Muon}, Spectra is $5.1\times$ faster in optimizer processing time, achieves a lower final loss, and improves average accuracy by $0.66\%$, 
while applying a localized spectral intervention that targets only the dominant spike subspace and avoids global spectrum flattening.

\section{Analysis}
\label{sec:analysis}

\subsection{Gradient Anisotropy: A consistent characteristic}

For a gradient matrix $\mathbf{G} \in \mathbb{R}^{m \times n}$, Singular Value Decomposition (SVD) is applied to obtain singular values $\{\sigma_i\}_{i=1}^{\min(m,n)}$, left (right) singular vectors $\{\mathbf{u}_i\} \in \mathbb{R}^m$ ($\{\mathbf{v}_i\} \in \mathbb{R}^n$), such that 
\(
\mathbf{G} = \sum_{i=1}^{\min(m,n)} \sigma_i \mathbf{u}_i \mathbf{v}_i^\top.
\)
We assume singular values are sorted in descending order, i.e., $\sigma_1 \geq \sigma_2 \geq \dots \geq \sigma_r \ge 0$ with $r = \min(m,n)$.

\begin{figure*}[ht]
    \centering
    \includegraphics[width=\textwidth]{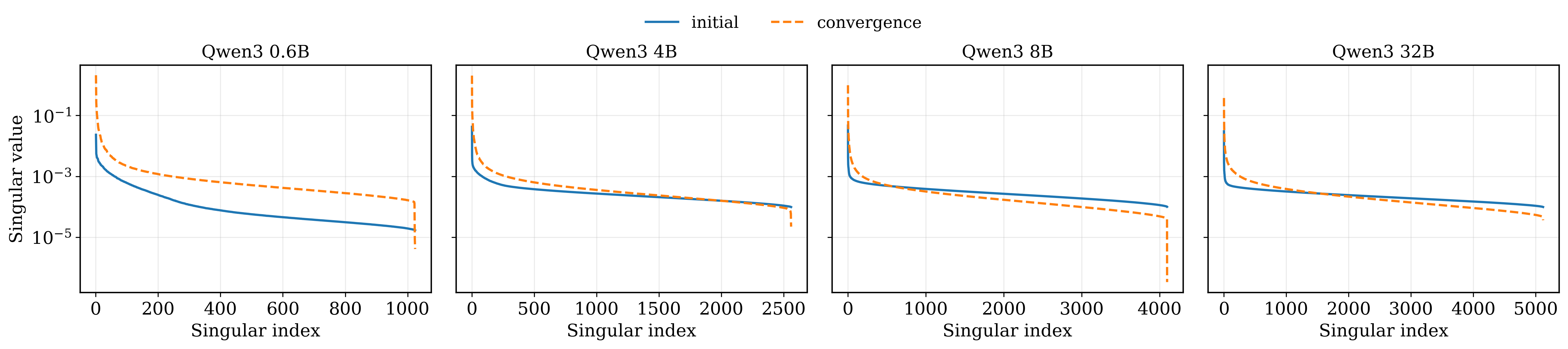}
    \caption{Singular-value spectra of the deepest-layer MLP gradient in Qwen3 models (0.6B--32B) at multiple training stages exhibit a consistent ``low-rank spike + smooth tail'' profile, with spike singular values separated from the tail by $\sim$1--2 orders of magnitude and occupying a nearly constant $\approx 1.5\%$ of directions.}
    \label{fig:gradient-anisotropy}
\end{figure*}

Across Qwen3~\cite{yang2025qwen3} models of different scales, from 0.6B to 32B, and at different training stages, Figure~\ref{fig:gradient-anisotropy} reports the gradient singular spectrum of the deepest MLP layer.
Results for attention modules and shallower layers are provided in Appendix~\ref{appendix:gradient-anisotropy}.
In all cases, \emph{the gradient spectrum exhibits a consistent anisotropic pattern}, following a ``low-rank spike + smooth tail'' profile: a compact spike block is separated from the tail by roughly one to two orders of magnitude in singular value.
In practice, the spectral region preceding the first eigengap occupies a small and stable fraction of directions, approximately $1.5\%$ across model scales, we therefore adopt this value as a stable default.

\subsection{Linguistic Correspondence of Gradient Anisotropy}
This subsection attributes gradient anisotropy to different linguistic signals in the training data. Specifically, we analyze gradient spectra on Qwen3-0.6B~\cite{yang2025qwen3} and LLaMA3-8B~\cite{dubey2024llama}.
For each model, we compute the gradient matrix $\mathbf{G}$ under three controlled conditions:
(i) \emph{Raw}, serving as the unmodified reference;
(ii) \emph{FreqNorm}, reducing the excessive contribution of high-frequency tokens;
and (iii) \emph{Shuffle}, removing syntactic dependencies and sequential structure.
For \emph{FreqNorm}, with token $t_j$ at position $j$ and corpus frequency $f(t_j)$, we rescale the token-wise loss as
$\tilde{\ell}_j=\ell_j/f(t_j)$ and backpropagate from $\sum_j \tilde{\ell}_j$.
For \emph{Shuffle}, we randomly permute tokens within each sentence before computing gradients.
We then compare the singular spectra of $\mathbf{G}$ across conditions to localize which spectral regions are sensitive to frequency skew and syntactic-order perturbations.

\begin{figure}[ht]
    \captionsetup{skip=1pt}
    \centering
    \includegraphics[width=\linewidth]{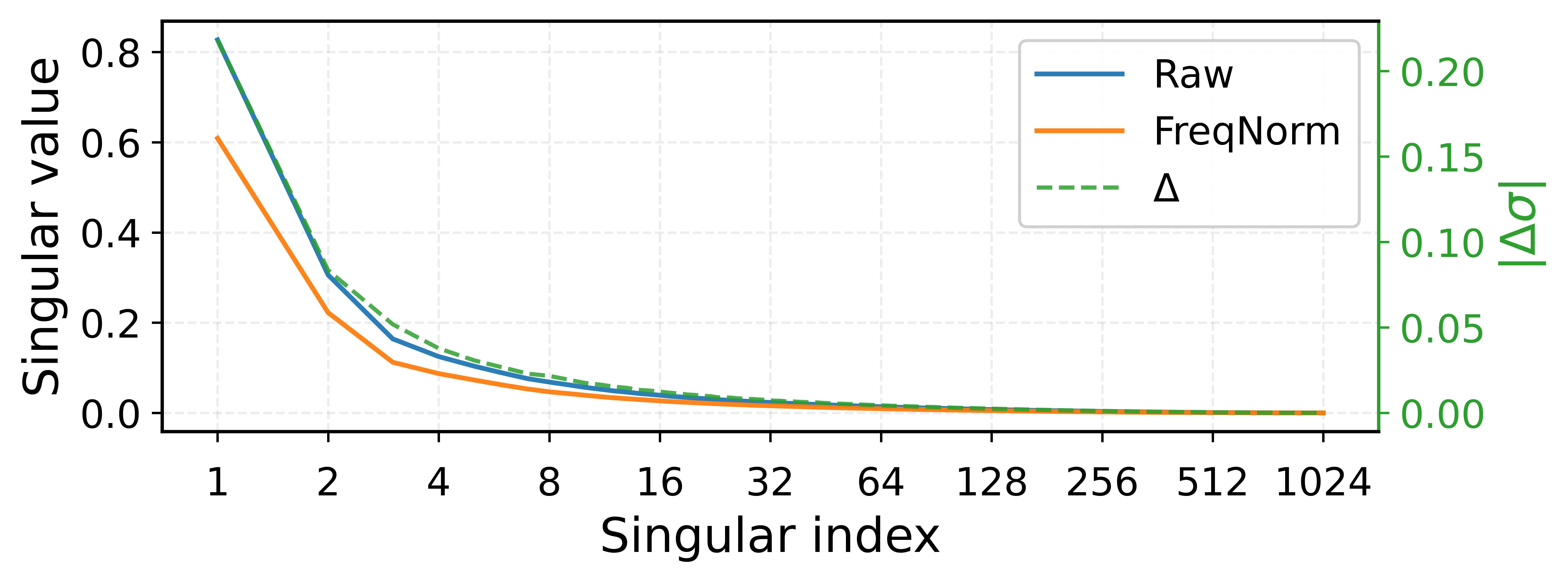}
    \vspace{0.6em}
    \includegraphics[width=\linewidth]{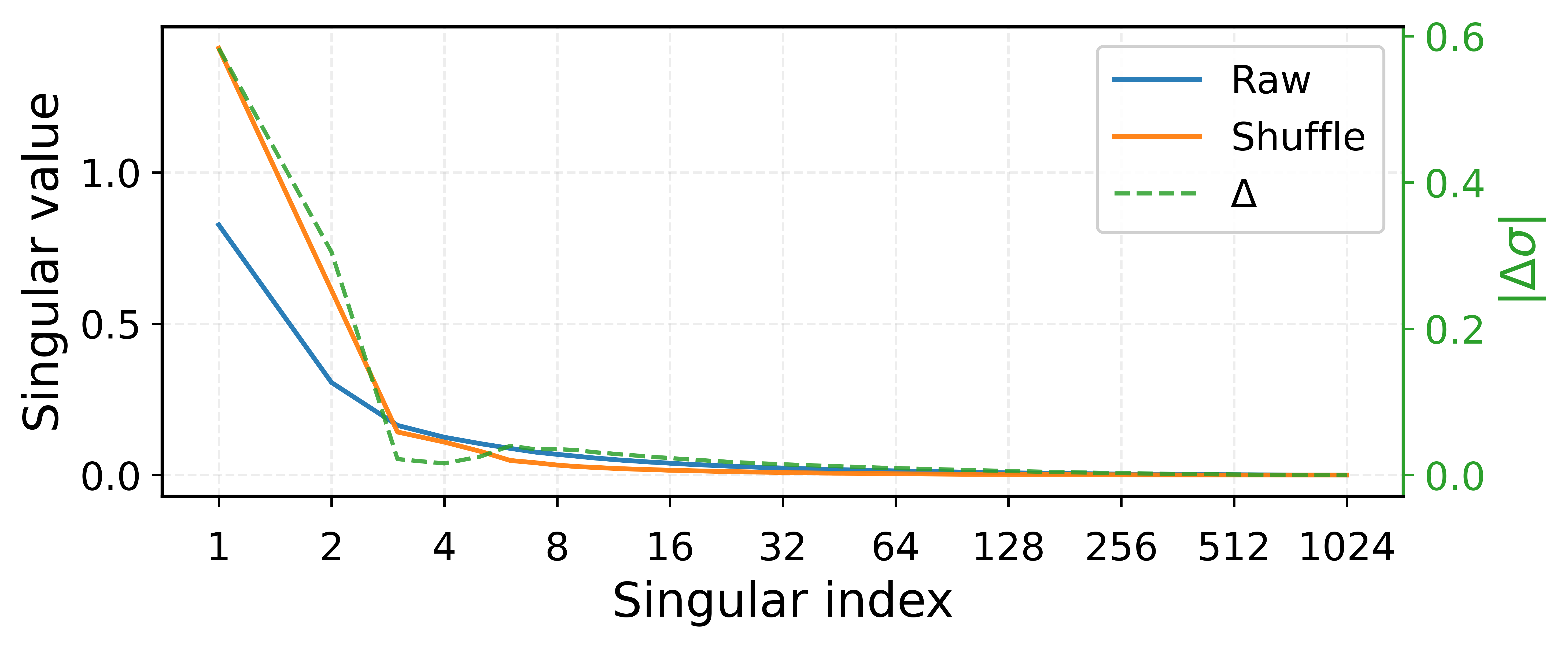}
    \caption{Gradient spectrum under two controlled interventions on Qwen3-0.6B: frequency-normalized loss (\emph{FreqNorm}, top) selectively suppresses the leading spike components, while intra-sentence token permutation (\emph{Shuffle}, bottom) selectively amplifies them; in both cases, changes rapidly vanish in the tail.}
    \label{fig:spike-semantic}
    \vspace{-1\baselineskip}
\end{figure}

Figure~\ref{fig:spike-semantic} reports results on Qwen3-0.6B, with LLaMA3-8B deferred to Appendix~\ref{appendix:semantic-correspondence}, and shows that both interventions perturb the spectrum almost exclusively in the spike: the absolute change $|\Delta\sigma_k|$ concentrates on the leading singular components and quickly vanishes in the tail.
Under \emph{FreqNorm}, the spike is selectively suppressed and the largest singular value drops by more than $25\%$, indicating that a substantial fraction of spike energy is driven by high-frequency-token contributions.
Under \emph{Shuffle}, the spike is selectively amplified, leading components increase by up to $75\%$, because disrupting word order removes syntactic structure that the pretrained model expects, inducing large corrective gradients concentrated in the spike subspace.

Together, these responses indicate that \emph{the spike predominantly reflects common grammatical signals driven by frequency skew and order-sensitive structure, while the smooth tail is comparatively robust and more associated with fine-grained, context-dependent semantics}.

\subsection{Spike Updating Suppresses Long-Tail Learning}
This subsection shows that spike dominance suppresses long-tail learning through two coupled mechanisms.
First, under AdamW-style optimization, spike-dominated second-moment accumulation controls element-wise normalization and contracts tail update magnitudes.
Second, spike-dominated stochastic gradient variance imposes a conservative learning-rate ceiling.

We measure the spectral structure of AdamW momentums on Qwen3-0.6B and LLaMA3-8B during pretraining.
We record the first moment $\mathbf{M}$ and the second moment $\mathbf{V}$, and compute their singular spectra together with the cumulative energy distribution (CDF),
$\mathrm{CDF}(j)=\sum_{i=1}^{j}\sigma_i^2\big/\sum_{i=1}^{r}\sigma_i^2$.
To isolate spike-dominated normalization, we decompose each moment into a spike projection and a residual tail:
$\mathbf{M}=\mathbf{M}_s+\mathbf{M}_t$ and $\mathbf{V}=\mathbf{V}_s+\mathbf{V}_t$,
where $\mathbf{M}_s \triangleq P_k(\mathbf{M})$ and $\mathbf{V}_s \triangleq P_k(\mathbf{V})$ denote the rank-$k$ truncated SVD reconstructions, and $\mathbf{M}_t\triangleq \mathbf{M}-\mathbf{M}_s$, $\mathbf{V}_t\triangleq \mathbf{V}-\mathbf{V}_s$.
Under the AdamW update $\Delta \mathbf{W}=-\eta\,\mathbf{M}/(\sqrt{\mathbf{V}}+\epsilon)$, the tail contribution is
$\Delta \mathbf{W}_t=-\eta\,\mathbf{M}_t/(\sqrt{\mathbf{V}_s+\mathbf{V}_t}+\epsilon)$,
showing that tail updates are normalized by a denominator dominated by $\mathbf{V}_s$ when $\mathbf{V}$ is highly anisotropic.
We visualize the suppression by comparing the element-wise magnitudes of
$\mathbf{M}_t/(\sqrt{\mathbf{V}_s+\mathbf{V}_t}+\epsilon)$
against the tail-only baseline
$\mathbf{M}_t/(\sqrt{\mathbf{V}_t}+\epsilon)$.

\begin{figure}[t]
    \centering
    \includegraphics[width=\linewidth]{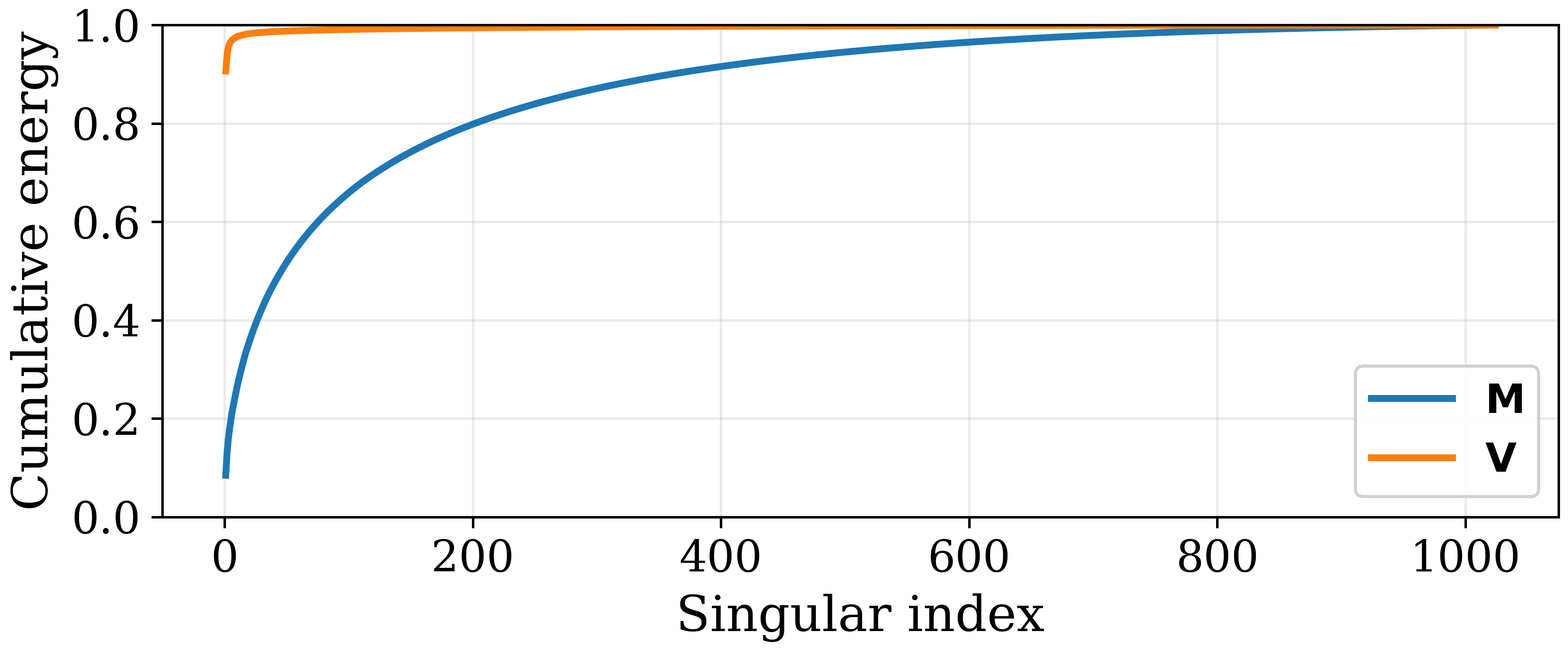}\\[0.5em]
    \includegraphics[width=\linewidth]{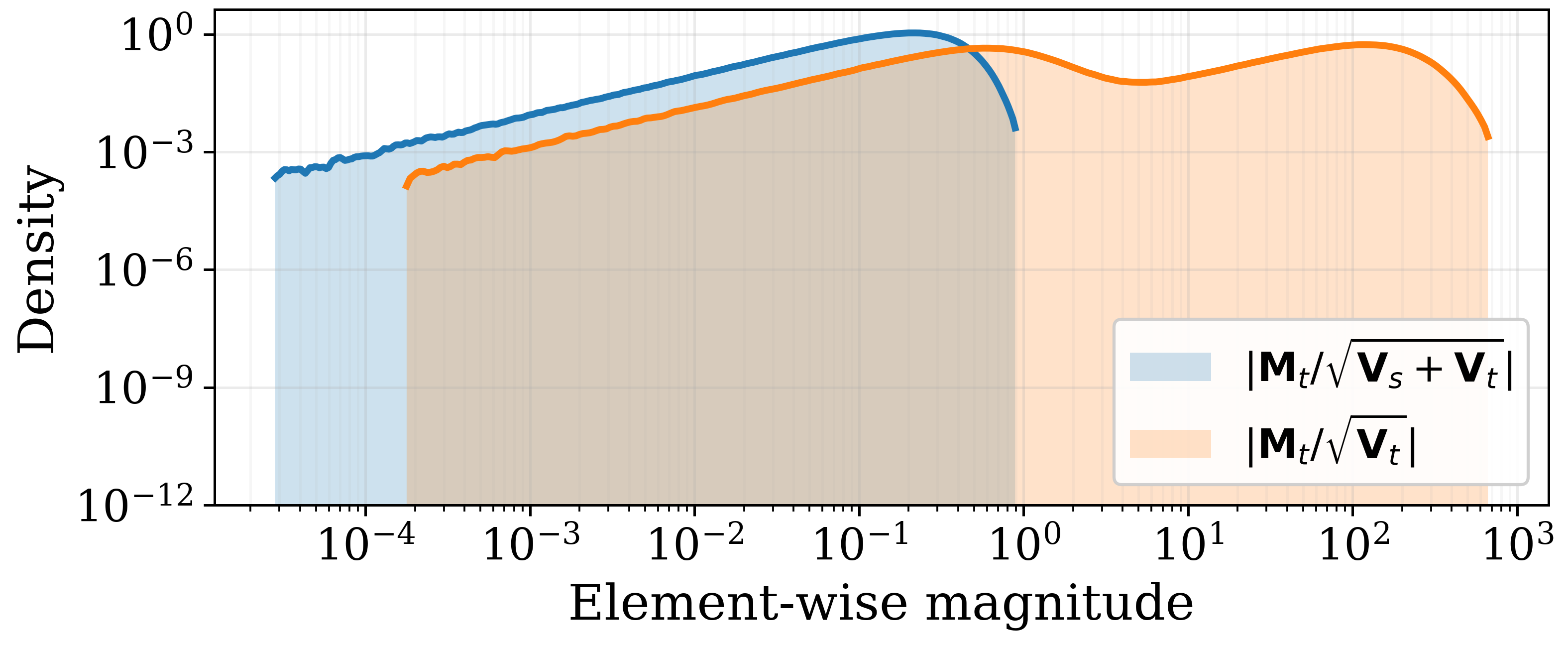}
    \caption{{Spike-dominated second-moment accumulation suppresses tail updates (Qwen3-0.6B).}
    \textit{Top:} cumulative spectral energy (CDF) of AdamW moments, showing that the second moment $\mathbf{V}$ is far more spike-concentrated than the first moment $\mathbf{M}$.
    \textit{Bottom:} element-wise magnitudes of tail updates, where full normalization $\mathbf{M}_t/(\sqrt{\mathbf{V}_s+\mathbf{V}_t}+\epsilon)$ is strongly contracted relative to the tail-only baseline $\mathbf{M}_t/(\sqrt{\mathbf{V}_t}+\epsilon)$.}
    \label{fig:spike-suppress-tail}
    \vspace{-0.8em}
\end{figure}

Figure~\ref{fig:spike-suppress-tail} (Qwen3-0.6B) shows that AdamW’s second moment $\mathbf{V}$ is much more spike-dominated than the first moment $\mathbf{M}$: the spike subspace already explains about $97\%$ of the spectral energy in $\mathbf{V}$, while accounting for only about $50\%$ in $\mathbf{M}$. Results on LLaMA3-8B are provided in Appendix~\ref{appendix:spike-suppress-tail-8b}.
This separation implies that element-wise normalization is effectively governed by spike-driven variance accumulation.
Accordingly, the tail-update distribution under full normalization, $\mathbf{M}_t/(\sqrt{\mathbf{V}_s+\mathbf{V}_t}+\epsilon)$, is markedly smaller than the tail-only baseline $\mathbf{M}_t/(\sqrt{\mathbf{V}_t}+\epsilon)$, with the latter shifted upward by roughly two orders of magnitude.
Overall, spike-dominated second-moment accumulation sharply contracts the effective step size available to long-tail directions.

Beyond element-wise suppression, spike-dominated stochastic variance also bounds the mean-optimal learning rate.
Consider the expected loss $L(\mathbf{w})$ for a layer weight matrix $\mathbf{W}\in\mathbb{R}^{m\times n}$ with $\mathbf{w}\triangleq \mathrm{vec}(\mathbf{W})$.
Let $\mathbf{G}\in\mathbb{R}^{m\times n}$ be the random mini-batch gradient computed with batch size $B$, with mean
$\bar{\mathbf{G}}\triangleq \mathbb{E}[\mathbf{G}]$.
Vectorizing gives $\mathbf{g}\triangleq \mathrm{vec}(\mathbf{G})$ and $\bar{\mathbf{g}}\triangleq \mathbb{E}[\mathbf{g}]=\mathrm{vec}(\bar{\mathbf{G}})$, and we write
$\mathrm{Cov}(\mathbf{g})=\mathbf{\Sigma}/B$ for some per-sample covariance $\mathbf{\Sigma}$.
Let $\mathbf{H}\triangleq \nabla^2 L(\mathbf{w})$ be the Hessian at the current iterate.
For the spike subspace, take the SVD of the mean gradient $\bar{\mathbf{G}}=\sum_{i=1}^{r}\sigma_i\,\mathbf{u}_i\mathbf{v}_i^\top$,
define $\mathbf{s}_i\triangleq \mathrm{vec}(\mathbf{u}_i\mathbf{v}_i^\top)$, and let
$\mathbf{\Pi}_k\triangleq \sum_{i=1}^{k}\mathbf{s}_i\mathbf{s}_i^\top$ be the projector onto $\mathrm{span}\{\mathbf{s}_1,\ldots,\mathbf{s}_k\}$.
We denote the spike-restricted covariance by $\mathbf{\Sigma}_s\triangleq \mathbf{\Pi}_k\,\mathbf{\Sigma}\,\mathbf{\Pi}_k$.
A second-order expansion of $L$ around $\mathbf{w}$ and taking expectation yields a quadratic surrogate in $\eta$, whose minimizer gives the mean-optimal learning rate. The full proof is provided in Appendix~\ref{appendix:proof_spike_lr_bound}.

\begin{theorem}[Spike-dominated variance bounds the mean-optimal learning rate]
\label{thm:spike_lr_bound}
Assume $\mathbf{H}\succeq \mathbf{0}$ and consider the update $\mathbf{w}^{+}=\mathbf{w}-\eta\,\mathbf{g}$.
Under the second-order approximation of $\mathbb{E}[L(\mathbf{w}^{+})]$, the mean-optimal learning rate is
\begin{equation}
\eta^\ast \;=\;
\frac{\|\bar{\mathbf{g}}\|_2^2}{\bar{\mathbf{g}}^\top \mathbf{H}\bar{\mathbf{g}}+\frac{1}{B}\,\mathrm{tr}(\mathbf{\Sigma}\mathbf{H})}.
\label{eq:eta_star}
\end{equation}
Moreover, it is upper bounded by the spike variance contribution:
\begin{equation}
\eta^\ast
\;\le\;
\frac{\|\bar{\mathbf{g}}\|_2^2}{\bar{\mathbf{g}}^\top \mathbf{H}\bar{\mathbf{g}}+\frac{1}{B}\,\mathrm{tr}(\mathbf{\Sigma}_s\mathbf{H})}
\;\le\;
\frac{B\,\|\bar{\mathbf{g}}\|_2^2}{\mathrm{tr}(\mathbf{\Sigma}_s\mathbf{H})}.
\label{eq:eta_star_spike_upper}
\end{equation}
If $\mathbf{H}\succeq \mu \mathbf{I}$ for some $\mu>0$, then
\begin{equation}
\eta^\ast \;\le\; \frac{B\,\|\bar{\mathbf{g}}\|_2^2}{\mu\,\mathrm{tr}(\mathbf{\Sigma}_s)}
\;=\;
\frac{B\,\|\bar{\mathbf{g}}\|_2^2}{\mu\sum_{i=1}^{k}\mathbf{s}_i^\top \mathbf{\Sigma}\mathbf{s}_i}.
\label{eq:eta_star_spike_upper_mu}
\end{equation}
\end{theorem}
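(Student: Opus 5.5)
We plan to start from the second-order Taylor expansion of $L$ around $\mathbf{w}$, evaluated at $\mathbf{w}^{+}=\mathbf{w}-\eta\mathbf{g}$:
\[
L(\mathbf{w}^{+})\approx L(\mathbf{w})-\eta\,\nabla L(\mathbf{w})^\top\mathbf{g}+\tfrac{\eta^2}{2}\,\mathbf{g}^\top\mathbf{H}\mathbf{g}.
\]
We then take expectation over the mini-batch, using $\nabla L(\mathbf{w})=\bar{\mathbf{g}}$, which holds because $L$ is the expected loss and the stochastic gradient is unbiased. The linear term becomes $-\eta\|\bar{\mathbf{g}}\|_2^2$. For the quadratic term we write $\mathbf{g}=\bar{\mathbf{g}}+\boldsymbol{\xi}$ with $\mathbb{E}\boldsymbol{\xi}=0$ and $\mathrm{Cov}(\boldsymbol{\xi})=\mathbf{\Sigma}/B$, and apply the trace identity:
\[
\mathbb{E}[\mathbf{g}^\top\mathbf{H}\mathbf{g}]=\bar{\mathbf{g}}^\top\mathbf{H}\bar{\mathbf{g}}+\tfrac1B\mathrm{tr}(\mathbf{\Sigma}\mathbf{H}).
\]
This gives a quadratic in $\eta$ whose leading coefficient is nonnegative because $\mathbf{H}\succeq0$. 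Assuming the denominator is positive, which is the nondegenerate case, setting the derivative to zero yields \eqref{eq:eta_star}.

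For \eqref{eq:eta_star_spike_upper}, everything reduces to showing $\mathrm{tr}(\mathbf{\Sigma}\mathbf{H})\ge\mathrm{tr}(\mathbf{\Sigma}_s\mathbf{H})$. We first check that $\mathbf{\Pi}_k$ is an orthogonal projector. The vectors $\mathbf{s}_i=\mathrm{vec}(\mathbf{u}_i\mathbf{v}_i^\top)$ are orthonormal, since $\langle\mathbf{u}_i\mathbf{v}_i^\top,\mathbf{u}_j\mathbf{v}_j^\top\rangle_F=(\mathbf{u}_i^\top\mathbf{u}_j)(\mathbf{v}_i^\top\mathbf{v}_j)=\delta_{ij}$, so $\mathbf{\Pi}_k$ is symmetric and idempotent.

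The natural decomposition $\mathbf{\Sigma}=\mathbf{\Sigma}_s+(\text{rest})$ does not by itself give a PSD remainder paired with $\mathbf{H}$, and this is the main obstacle. The cross terms $\mathbf{\Pi}_k\mathbf{\Sigma}(\mathbf{I}-\mathbf{\Pi}_k)$ are not PSD, and $\mathbf{\Sigma}-\mathbf{\Pi}_k\mathbf{\Sigma}\mathbf{\Pi}_k$ need not be PSD either. My first approach is to use a compatibility condition between $\mathbf{H}$ and the spike subspace. If $\mathbf{H}$ commutes with $\mathbf{\Pi}_k$, for example when it is block-diagonal with respect to the spike/tail split, then
\[
\mathrm{tr}(\mathbf{\Sigma}\mathbf{H})=\mathrm{tr}(\mathbf{\Sigma}_s\mathbf{H})+\mathrm{tr}\big((\mathbf{I}-\mathbf{\Pi}_k)\mathbf{\Sigma}(\mathbf{I}-\mathbf{\Pi}_k)\mathbf{H}\big).
\]
The second term is nonnegative as the trace of a product of two PSD matrices, computed as $\mathrm{tr}(\mathbf{H}^{1/2}\mathbf{A}\mathbf{H}^{1/2})\ge0$. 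Absent commutation, the fallback is to interpret the inequality in the proof as $\mathrm{tr}(\mathbf{\Sigma}\mathbf{H})\ge\mathrm{tr}(\mathbf{\Sigma}_s\mathbf{H})$, stated as the modeling assumption that the spike-restricted curvature-weighted variance is a lower bound. That assumption holds exactly under the commutation or block structure above, and we would state it explicitly.

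The second inequality in \eqref{eq:eta_star_spike_upper} follows by dropping $\bar{\mathbf{g}}^\top\mathbf{H}\bar{\mathbf{g}}\ge0$ and multiplying through by $B$.

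Finally, for \eqref{eq:eta_star_spike_upper_mu}, $\mathbf{H}\succeq\mu\mathbf{I}$ and $\mathbf{\Sigma}_s\succeq0$ give $\mathrm{tr}(\mathbf{\Sigma}_s\mathbf{H})=\mathrm{tr}(\mathbf{\Sigma}_s^{1/2}\mathbf{H}\mathbf{\Sigma}_s^{1/2})\ge\mu\,\mathrm{tr}(\mathbf{\Sigma}_s)$. We then compute
\[
\mathrm{tr}(\mathbf{\Pi}_k\mathbf{\Sigma}\mathbf{\Pi}_k)=\mathrm{tr}(\mathbf{\Pi}_k\mathbf{\Sigma})=\sum_{i=1}^k\mathbf{s}_i^\top\mathbf{\Sigma}\mathbf{s}_i,
\]
using cyclicity and idempotence, which completes the chain.
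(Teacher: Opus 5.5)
Your derivation of \eqref{eq:eta_star} and your final $\mu$-computation match the paper's proof. The only divergence is the comparison $\mathrm{tr}(\mathbf{\Sigma}_s\mathbf{H})\le\mathrm{tr}(\mathbf{\Sigma}\mathbf{H})$, and there your caution is justified while the paper's argument is not.

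The paper asserts $\mathbf{0}\preceq\mathbf{\Pi}_k\mathbf{H}\mathbf{\Pi}_k\preceq\mathbf{H}$ for any orthogonal projector and PSD $\mathbf{H}$, which is false. For $\mathbf{H}=\begin{pmatrix}1&1\\1&1\end{pmatrix}$ and $\mathbf{\Pi}=\mathrm{diag}(1,0)$, the matrix $\mathbf{H}-\mathbf{\Pi}\mathbf{H}\mathbf{\Pi}=\begin{pmatrix}0&1\\1&1\end{pmatrix}$ has determinant $-1$.

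The trace inequality itself also fails. Take $\mathbf{\Sigma}=\mathbf{x}\mathbf{x}^\top$ and $\mathbf{H}=\mathbf{y}\mathbf{y}^\top$ with $\mathbf{x}=(1,1)^\top$ and $\mathbf{y}=(1,-1)^\top$. Then $\mathrm{tr}(\mathbf{\Sigma}\mathbf{H})=0<1=\mathrm{tr}(\mathbf{\Pi}\mathbf{\Sigma}\mathbf{\Pi}\mathbf{H})$. This embeds in the theorem's setting: take $\bar{\mathbf{G}}=\mathrm{diag}(2,1)$, $k=1$, $\mathbf{t}=\mathrm{vec}(\mathbf{e}_1\mathbf{e}_2^\top)$, $\mathbf{x}=\mathbf{s}_1-\mathbf{t}$, $\mathbf{y}=\mathbf{s}_1-2\mathbf{s}_2+\mathbf{t}$, and $\mathbf{H}=\mathbf{y}\mathbf{y}^\top+\epsilon\mathbf{I}$. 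Then $\eta^\ast=5B/(\epsilon(5B+2))$, which exceeds both right-hand sides of \eqref{eq:eta_star_spike_upper} once $\epsilon(5B+1)<1$.

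So \eqref{eq:eta_star_spike_upper} genuinely needs an extra hypothesis, and your commutation condition supplies it correctly. Writing $\mathbf{\Pi}_k^\perp=\mathbf{I}-\mathbf{\Pi}_k$, it gives $\mathrm{tr}(\mathbf{\Sigma}\mathbf{H})=\mathrm{tr}(\mathbf{\Sigma}_s\mathbf{H})+\mathrm{tr}(\mathbf{\Pi}_k^\perp\mathbf{\Sigma}\mathbf{\Pi}_k^\perp\mathbf{H})$. An equally good alternative, arguably more natural for a statement about spike variance, is to assume instead that $\mathbf{\Sigma}$ commutes with $\mathbf{\Pi}_k$ (no spike--tail cross-covariance). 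The same decomposition then holds for every PSD $\mathbf{H}$.

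One adjustment: you obtain \eqref{eq:eta_star_spike_upper_mu} from the second inequality of \eqref{eq:eta_star_spike_upper}, so it inherits the extra hypothesis, but it holds without it.
\begin{itemize}
\item Drop $\bar{\mathbf{g}}^\top\mathbf{H}\bar{\mathbf{g}}$ first to get $\eta^\ast\le B\|\bar{\mathbf{g}}\|_2^2/\mathrm{tr}(\mathbf{\Sigma}\mathbf{H})$.
\item Use $\mathrm{tr}(\mathbf{\Sigma}\mathbf{H})\ge\mu\,\mathrm{tr}(\mathbf{\Sigma})$.
\item Use $\mathrm{tr}(\mathbf{\Sigma})=\mathrm{tr}(\mathbf{\Sigma}_s)+\mathrm{tr}(\mathbf{\Pi}_k^\perp\mathbf{\Sigma}\mathbf{\Pi}_k^\perp)\ge\mathrm{tr}(\mathbf{\Sigma}_s)$.
\end{itemize}
The cross terms vanish under the trace by idempotence and cyclicity, even though they do not vanish as matrices. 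That cancellation is exactly what fails once $\mathbf{H}$ is inserted. In the example above, $\mu=\epsilon$ and the bound $5B/\epsilon$ indeed holds. With this reordering, \eqref{eq:eta_star} and \eqref{eq:eta_star_spike_upper_mu} hold as stated, and only \eqref{eq:eta_star_spike_upper} carries your added assumption.
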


Theorem~\ref{thm:spike_lr_bound} shows that the mean-optimal learning rate is governed by the curvature--variance denominator
$\bar{\mathbf{g}}^\top \mathbf{H}\bar{\mathbf{g}}+\mathrm{tr}(\mathbf{\Sigma}\mathbf{H})/B$.
When stochastic variance concentrates in the spike subspace, the effective bound tightens to the spike term
$\mathrm{tr}(\mathbf{\Sigma}_s\mathbf{H})/B$, so $\eta^\ast$ is primarily constrained by common-structure fluctuations.

Taken together, \emph{spike dominance suppresses long-tail learning through both local and global mechanisms:
it contracts tail updates via spike-driven second-moment normalization, and it caps the globally stable step size through spike-dominated stochastic variance, leaving tail directions to evolve under persistently underpowered updates}.

\subsection{Smaller Singular Directions Encode Sparser Semantics with Higher Relative Variance}
This subsection examines tail singular directions as carriers of sparse semantic signals, where only a small fraction of samples yield non-negligible projections onto a given direction. We assess their reliability under stochastic gradients using \emph{relative variance}, a scale-normalized measure of per-direction fluctuation.

We form a reference gradient $\bar{\mathbf{G}}$ by averaging gradients over a very large batch ($B_{\mathrm{ref}}=1024$) in Qwen3-0.6B, and compute its SVD
$\bar{\mathbf{G}}=\sum_{k=1}^{r}\sigma_k\,\mathbf{u}_k\mathbf{v}_k^\top$ to define a fixed spectral basis.
We then collect a large set of stochastic micro-batch gradients $\{\mathbf{G}_i\}$ and project each onto the $k$-th spectral component using $a_{i,k}\;\triangleq\;\mathbf{u}_k^\top \mathbf{G}_i \mathbf{v}_k .$
We then compute the per-direction relative variance $\mathrm{RelVar}(k)\;\triangleq\;\frac{\mathrm{Var}(a_{i,k})}{\sigma_k^2},$
which measures the noise sensitivity of each spectral direction relative to its signal magnitude.

\begin{figure}[ht]
    \centering
    \includegraphics[width=\linewidth]{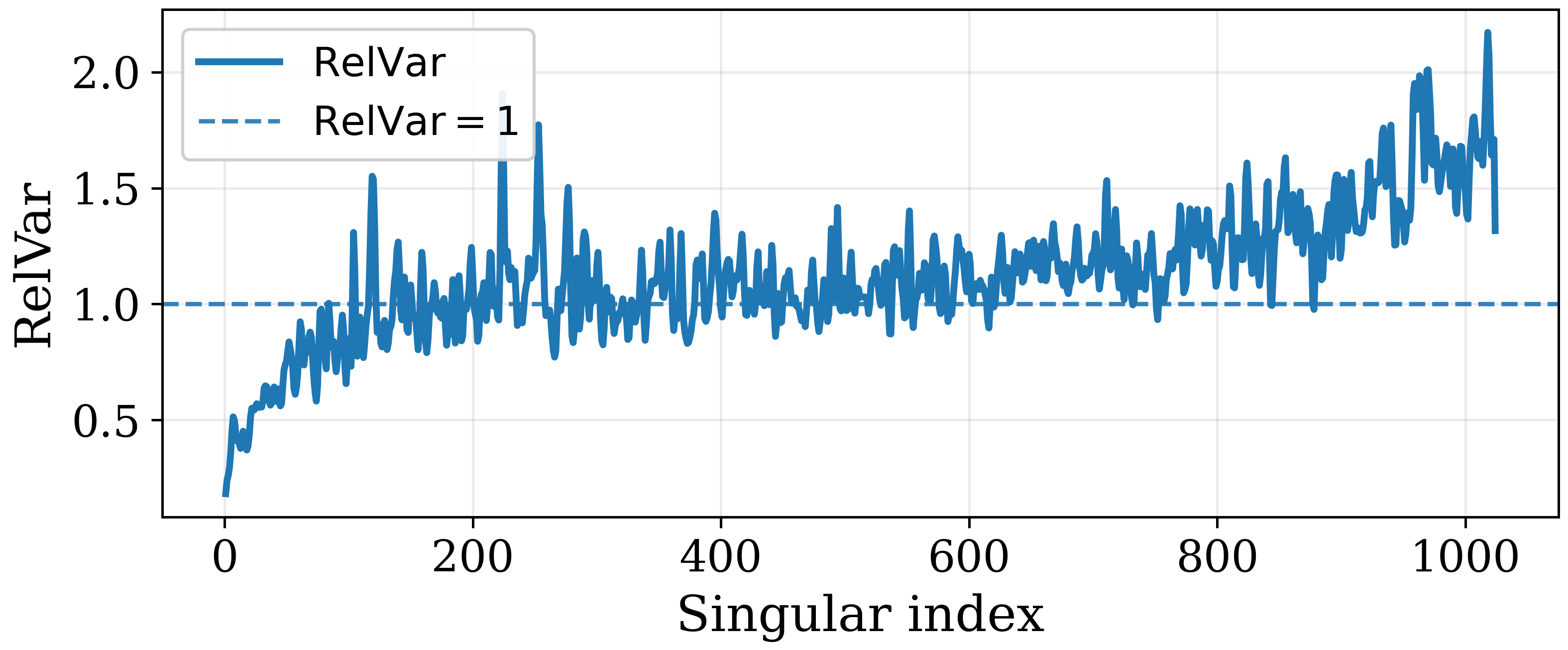}
    \caption{$\mathrm{RelVar}(k)=\mathrm{Var}(a_k)/\sigma_k^2$ increases with $k$, indicating more noise-dominated small-singular directions.}
    \label{fig:relvar}
\end{figure}

Figure~\ref{fig:relvar} shows a monotonic rise of $\mathrm{RelVar}(k)$ with the singular-value index $k$ across micro-batch settings, with the increase most pronounced in the tail.
Consequently, \emph{smaller-singular directions are increasingly noise-dominated}: their stochastic fluctuations are large relative to their signal scale, consistent with tail components encoding sparser and more intermittent semantics.

\subsection{Numerical Variance in Iterative Methods Disproportionately Rotates Tail Singular Directions}

This subsection analyzes \emph{numerical variance} introduced by iterative spectral routines.
Using Newton--Schulz (NS) iteration as a representative example, we show that while spike directions remain relatively stable, tail directions can be substantially rotated by iterative updates.

Let $\mathrm{NS}(\mathbf{G})$ denote the matrix produced by NS iteration, and let
$\{\mathbf{v}_i\}_{i=1}^{r}$ and $\{\widehat{\mathbf{v}}_i\}_{i=1}^{r}$ be the right singular vectors of $\mathbf{G}$ and $\mathrm{NS}(\mathbf{G})$, respectively.
We quantify per-direction preservation by $\mathrm{align}(i)\;\triangleq\;\max_{j\in[r]}\big|\langle \mathbf{v}_i,\widehat{\mathbf{v}}_j\rangle\big|,$
where a value close to $1$ indicates the $i$-th direction is preserved, and a small value indicates severe rotation.

\begin{figure}[ht]
    \centering
    \includegraphics[width=\linewidth]{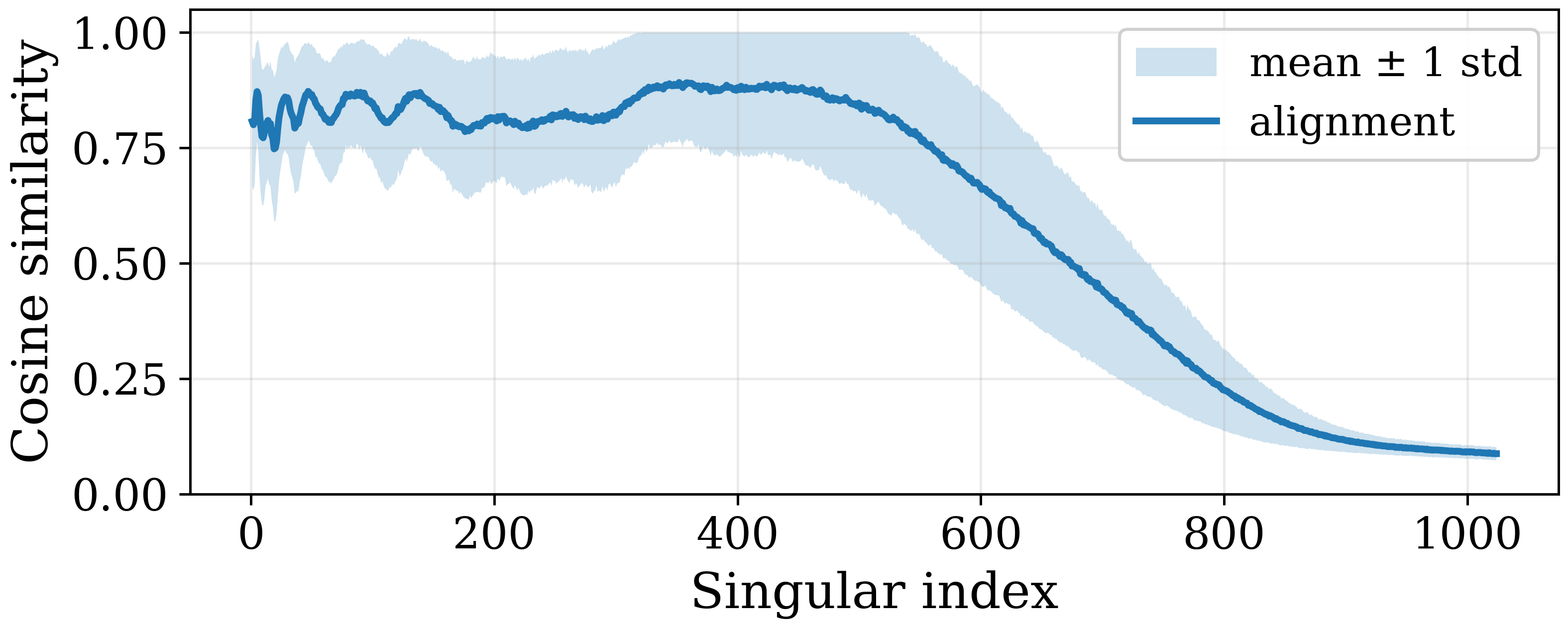}
    \caption{Alignment between singular directions of $G$ and $\mathrm{NS}(G)$.
    NS largely preserves head directions but severely disrupts tail directions.}
    \label{fig:ns_alignment}
    \vspace{-0.8em}
\end{figure}

Figure~\ref{fig:ns_alignment} shows a clear head--tail split: leading singular vectors remain well aligned, around 0.85, while alignment drops monotonically in the tail and approaches 0.1.

Therefore, iterative spectral processing is not numerically neutral for long-tail learning: \emph{it preferentially perturbs directions that already have weak and noise-sensitive signals, and more aggressive tail equalization would further exacerbate this effect while adding computation.}




\section{Method}
The findings in analysis section lead to a simple design principle:
\emph{attenuate the dominant common-feature spike subspace while avoiding aggressive amplification of the noise-dominated spectral tail.}

Motivated by this principle, we propose \textit{Spectra}, a spike-aware optimizer that explicitly operates only on a low-dimensional spike subspace and leaves the tail unamplified.
Spectra maintains a cached estimate of the spike subspace via warm-started power iteration, updates it intermittently, and performs spike singular-value shrinking towards the average scale of the tail.

\subsection{The Spectra Optimizer}
\label{sec:spectra_overview}

Spectra maintains a momentum matrix and performs spike singular-value shrinking on this momentum.
At each step, it (i) updates the momentum, (ii) estimates a rank-$k$ spike subspace via warm-started power iteration, (iii) replaces the spike singular values with a tail-scale estimate while keeping the tail residual unchanged, and (iv) normalizes the step size using the RMS scale of the resulting shaped update.
This suppresses spike-dominated updates without equalizing or amplifying the noise-sensitive tail, and avoids dense per-parameter second-moment statistics.

\begin{algorithm}[h]
\caption{Spectra Optimizer Step}
\label{alg:spectra_merged}
\begin{algorithmic}[1]
\STATE {\bfseries Input:} weights $\mathbf{W}_{t-1}\in\mathbb{R}^{m\times n}$, gradient $\mathbf{G}_t\in\mathbb{R}^{m\times n}$, momentum $\mathbf{M}_{t-1}\in\mathbb{R}^{m\times n}$
\STATE {\bfseries Hyperparams:} learning rate $\eta$, momentum coefficient $\mu$, rank ratio $r$, power iterations $T$, $\epsilon$
\STATE {\bfseries Output:} updated weights $\mathbf{W}_t$, updated momentum $\mathbf{M}_t$
\STATE $k \leftarrow \max\!\big(1,\mathrm{round}(r\cdot \min(m,n))\big)$
\STATE $\mathbf{M}_t \leftarrow \mu \mathbf{M}_{t-1} + \mathbf{G}_t$
\STATE $(\mathbf{U}_k, \mathbf{s}_k, \mathbf{V}_k) \leftarrow \textsc{PowerIterationSVD}(\mathbf{M}_t, k, T)$
\STATE $\mathbf{M}_{\mathrm{tail}} \leftarrow \mathbf{M}_t - \mathbf{U}_k\,\mathrm{diag}(\mathbf{s}_k)\,\mathbf{V}_k^\top$
\STATE $\sigma_{\mathrm{tail}} \leftarrow \sqrt{\|\mathbf{M}_{\mathrm{tail}}\|_F^2 / (\min(m,n)-k)}$
\STATE $\mathbf{O}_t \leftarrow \mathbf{M}_{\mathrm{tail}} + \mathbf{U}_k\,\mathrm{diag}(\sigma_{\mathrm{tail}}\mathbb{I})\,\mathbf{V}_k^\top$
\STATE $RMS \leftarrow \|\mathbf{O}_t\|_F / \sqrt{mn}$
\STATE $\eta' \leftarrow 0.2\,\eta / (RMS+\epsilon)$
\STATE $\mathbf{W}_t \leftarrow \mathbf{W}_{t-1} - \eta' \mathbf{O}_t$
\STATE {\bfseries return} $(\mathbf{W}_t, \mathbf{M}_t)$
\end{algorithmic}
\end{algorithm}

\noindent
Unlike full spectrum-flattening, e.g. Muon, which enforces global equalization and can over-emphasize small-singular, noise-dominated modes, Spectra performs a \emph{localized} intervention: it keeps the residual tail $\mathbf{M}_{\mathrm{tail}}$ unchanged and only shrinks the spike singular values toward the tail’s average scale $\sigma_{\mathrm{tail}}$.
The final update $\mathbf{O}_t$ is then used both for the parameter update and for RMS normalization, ensuring that step-size calibration reflects the shaped update actually applied.

\subsection{Efficient Spike Subspace Estimation}
\label{sec:subspace}

A practical obstacle for spectral-domain optimization is the cost of repeatedly computing SVDs on large matrices.
For $\mathbf{G}\in\mathbb{R}^{m\times n}$, a full SVD scales as $\mathcal{O}(\min(m,n)\,mn)$ and is infeasible in the training inner loop.
However, gradient energy concentrates in a compact low-rank spike.
Spectra therefore only needs a \emph{rank-$k$} approximation that tracks this dominant subspace, rather than an exact factorization of the full spectrum.

\textbf{Cached subspace iteration.}
We estimate the spike subspace using a cached power-iteration routine (Algorithm~\ref{alg:power_iter_svd}).
The key is to exploit temporal continuity: the spike subspace changes slowly across steps, so the previous right-singular subspace provides an accurate warm start.
Concretely, we cache the rank-$k$ right subspace $\mathbf{V}$ from the previous step and initialize the current iteration with it.
Each iteration applies two inexpensive projections, $\mathbf{P}\leftarrow \mathbf{G}\mathbf{V}$ and $\mathbf{W}\leftarrow \mathbf{G}^\top \mathbf{U}$, interleaved with a thin QR orthonormalization to maintain numerical stability.
After $T$ iterations, we output a rank-$k$ estimate $(\mathbf{U}_k,\mathbf{s}_k,\mathbf{V}_k)$ and refresh the cache with $\mathbf{V}_k$.
In practice, warm-starting substantially reduces the iterations required to reliably track the spike subspace compared to cold-start randomized SVD.

\begin{algorithm}[t]
  \caption{Cached Power-Iteration SVD (rank-$k$)}
  \label{alg:power_iter_svd}
  \begin{algorithmic}[1]
    \STATE {\bfseries Input:} matrix $\mathbf{G}\in\mathbb{R}^{m\times n}$, rank $k$, iteration count $T$, cache state
    \STATE {\bfseries Output:} $(\mathbf{U}_k,\mathbf{s}_k,\mathbf{V}_k)$
    \STATE $\mathbf{V}^{(0)} \leftarrow \mathrm{State}[V_{\mathrm{cache}}]$
    \IF{$\mathbf{V}^{(0)}$ is \textbf{None}}
        \STATE $(\mathbf{U}_k,\mathbf{s}_k,\mathbf{V}_k) \leftarrow \textsc{svd\_lowrank}(\mathbf{G},k)$  \textit{// bootstrap}
        \STATE $\mathrm{State}[V_{\mathrm{cache}}]\leftarrow \mathbf{V}_k$; \ \textbf{return} $(\mathbf{U}_k,\mathbf{s}_k,\mathbf{V}_k)$
    \ENDIF
    \FOR{$i=1$ {\bfseries to} $T$}
        \STATE $\mathbf{P} \leftarrow \mathbf{G} \mathbf{V}^{(i-1)}$
        \STATE $\mathbf{U}^{(i)} \leftarrow \textsc{ThinQR}(\mathbf{P})$
        \STATE $\mathbf{W} \leftarrow \mathbf{G}^\top \mathbf{U}^{(i)}$
        \STATE $\mathbf{s}^{(i)} \leftarrow \textsc{ColNorms}(\mathbf{W})$
        \STATE $\mathbf{V}^{(i)} \leftarrow \mathbf{W}\,\mathrm{diag}\!\big((\mathbf{s}^{(i)})^{-1}\big)$ \hfill \textit{// normalize columns}
    \ENDFOR
    \STATE $\mathbf{U}_k \leftarrow \mathbf{U}^{(T)}$, $\mathbf{s}_k \leftarrow \mathbf{s}^{(T)}$, $\mathbf{V}_k \leftarrow \mathbf{V}^{(T)}$
    \STATE $\mathrm{State}[V_{\mathrm{cache}}]\leftarrow \mathbf{V}_k$ \hfill \textit{// update cache}
    \STATE \textbf{return} $(\mathbf{U}_k,\mathbf{s}_k,\mathbf{V}_k)$
  \end{algorithmic}
\end{algorithm}

\begin{figure}[ht]
    \centering
    \includegraphics[width=\linewidth]{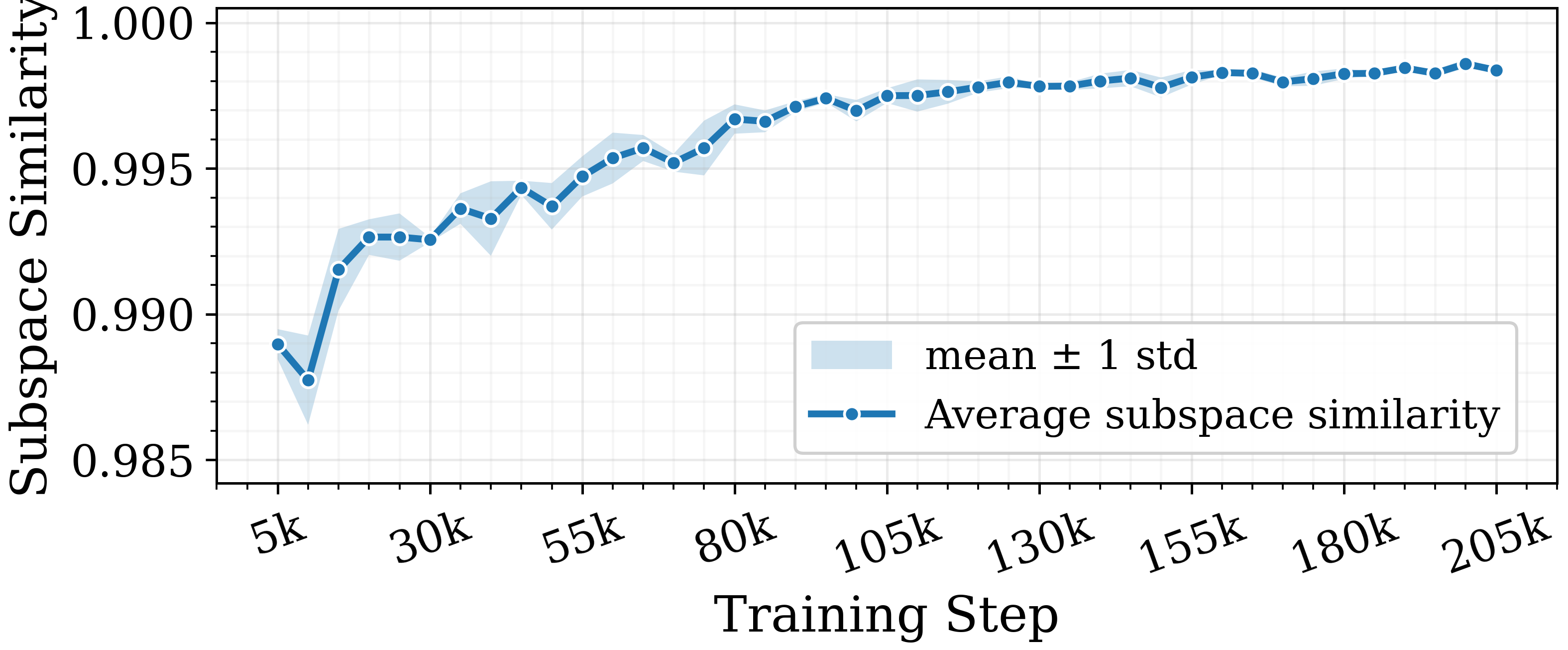}
    \caption{Temporal continuity of the spike subspace. We report the step-to-step similarity between the top-$k$ right-singular subspaces of consecutive gradients, showing consistently high similarity.}
    \label{fig:subspace_similarity}
\end{figure}

\textbf{Empirical justification: subspace continuity.}
Caching is effective only if the spike subspace is stable across adjacent steps.
We therefore measure the similarity between the spike right-singular subspaces of consecutive gradients using the largest canonical correlation.
Figure~\ref{fig:subspace_similarity} shows consistently high similarity (above $0.98$) throughout training, indicating that the spike subspace evolves slowly.
This stability justifies cached warm-starts and enables accurate tracking with only one power-iteration step, amortizing the cost of subspace estimation in Spectra.

\subsection{Efficiency Analysis}
\label{sec:complexity}

\textbf{Theoretical complexity.}
Table~\ref{tab:complexity} compares Spectra with AdamW~\cite{loshchilov2017decoupled} and Muon.
AdamW applies element-wise moment updates and thus costs $\mathcal{O}(mn)$ per step, but stores two dense moment buffers ($2mn$).
Muon performs Newton--Schulz iterations with full matrix multiplications, incurring $\mathcal{O}(T\cdot mn\min(m,n))$ time for $T$ iterations.
Spectra estimates only a rank-$k$ spike subspace via cached power iteration: each iteration is dominated by two matrix multiplications $\mathbf{G}\mathbf{V}$ and $\mathbf{G}^\top \mathbf{U}$, costing $\mathcal{O}(mnk)$, with an additional $\mathcal{O}(mk^2)$ thin-QR negligible cost when $k$ is a small fraction of the layer dimension.
Thus, Spectra’s overhead is $\mathcal{O}(Tmnk)$ for $T$ power-iteration steps, with $k\approx 0.015\min(m,n)$ in our default setting.

\begin{table}[ht]
  \caption{Optimizer states memory cost and per-step complexity for $\mathbf{G} \in \mathbb{R}^{m \times n}$ ($k \ll \min(m, n)$). $T$ denote the iteration counts of NS (Muon) and power iteration (Spectra), respectively.}
  \label{tab:complexity}
  \centering
  \small
  \begin{tabular}{lcc}
    \toprule
    Optimizer  & Memory Cost & Complexity \\
    \midrule
    AdamW & $2mn$ & $\mathcal{O}(mn)$  \\
    Muon  & $mn$  & $\mathcal{O}(Tmn\min(m,n))$ \\
    Spectra & $mn+nk$ & $\mathcal{O}(Tmnk)$  \\
    \bottomrule
  \end{tabular}
\end{table}

\textbf{Empirical Latency.} To validate the efficiency gains, we benchmark the latency of NS iterations against our Spectral Power Iteration on an NVIDIA H200 GPU. As shown in Table \ref{tab:latency}, for a standard LLM layer size ($4096 \times 14336$) in 8B model, Power-Iter with 1-2 iterations is $3.5\times$ to $5\times$ faster than NS. Even with 4 iterations, Spectra maintains a significant speed advantage. This efficiency allows Spectra to perform spectral preconditioning with negligible overhead compared to the backward pass.

\begin{table*}[h]
\centering
\caption{Empirical latency comparison (ms) on H200 GPU. Power-Iter uses a rank ratio of $1.5\%$. $T$ denotes the number of power iterations.}
\label{tab:latency}
\small
\begin{tabular}{l|c|cccc}
\toprule
\textbf{Matrix Size} & \textbf{NS} & \textbf{Power-Iter ($T=1$)} & \textbf{Power-Iter ($T=2$)} & \textbf{Power-Iter ($T=4$)} & \textbf{Power-Iter ($T=8$)} \\
\midrule
(4096, 4096)  & 3.5664 & 0.9724 & 1.6012 & 2.8648 & 5.3927 \\
(4096, 14336) & 9.1465 & 1.7799 & 2.5622 & 4.1315 & 7.2676 \\
\bottomrule
\end{tabular}
\end{table*}

\section{Experiments}

\textbf{Models and Datasets.} We conduct experiments on two architectures: Qwen3-0.6B on 100B tokens and LLaMA3-8B on 50B tokens. For pretraining, we use the DCLM~\cite{li2024datacomp} dataset. For downstream evaluation, we consider three task types: question answering (ARC~\cite{clark2018think}, RACE~\cite{lai2017race}, BoolQ~\cite{clark2019boolq}), classification (HellaSwag~\cite{zellers2019hellaswag}, PIQA~\cite{bisk2020piqa}), and cloze prediction (LAMBADA~\cite{kazemi2023lambada}). See Appendix~\ref{appendix:experiment-details} for settings.

\textbf{Baselines.} We compare Spectra against AdamW and Muon, which applies iterative Newton--Schulz updates to approximate orthogonalized matrix steps; for Qwen3-0.6B, we additionally include Dion~\cite{ahn2025dion}, which uses power iteration to form a low-rank update.

\subsection{Main Results}
\textbf{Training Loss and Convergence.} Figure~\ref{fig:loss_curve} illustrates the training loss curves for both the 0.6B and 8B models. Spectra demonstrates superior convergence efficiency across both scales: on the 0.6B model, Spectra achieves a final validation loss that is 2.1\% lower than AdamW and 1.4\% lower than Muon, and reaches a matched loss level 30\% faster in wall-clock time than AdamW. This scaling advantage is further confirmed on the 8B model, where Spectra outperforms AdamW and Muon by 1.5\% and 0.4\% in final loss, respectively.

\begin{table}[ht]
    \centering
    \caption{Computational efficiency comparison on Qwen3-0.6B. Step time is measured as seconds per step (ms/step) using a per-GPU batch size of 500K tokens. VRAM indicates peak memory usage per GPU.}
    \label{tab:efficiency}
    \small
    \begin{tabularx}{0.45\textwidth}{L c c}
    \toprule
    Optimizer & Step Time (ms) & VRAM Usage (GB)\\
    \midrule
    AdamW      & 5297 & 107.4\\
    Dion      & 5270 & 132.2\\
    Muon      & 5385 & 105.8\\
    Spectra   & \textbf{5259} & \textbf{103.5} \\
    \bottomrule
    \end{tabularx}
\end{table}

\begin{figure}[ht]
    \centering
    \begin{minipage}{0.48\textwidth}
        \centering
        \includegraphics[width=\textwidth]{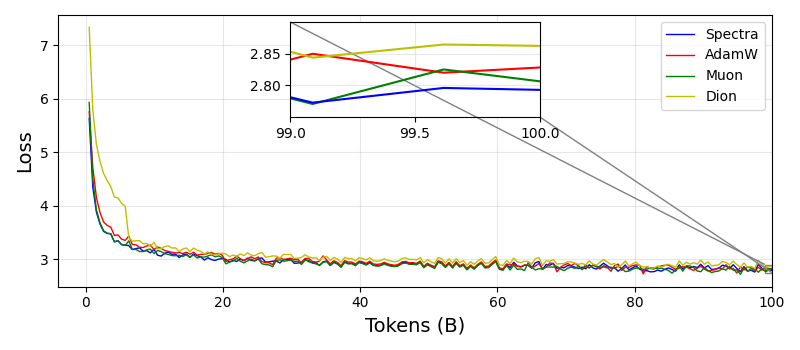}
        \centerline{(A) Qwen3-0.6B}
    \end{minipage}
    \hfill
    \begin{minipage}{0.48\textwidth}
        \centering
        \includegraphics[width=\textwidth]{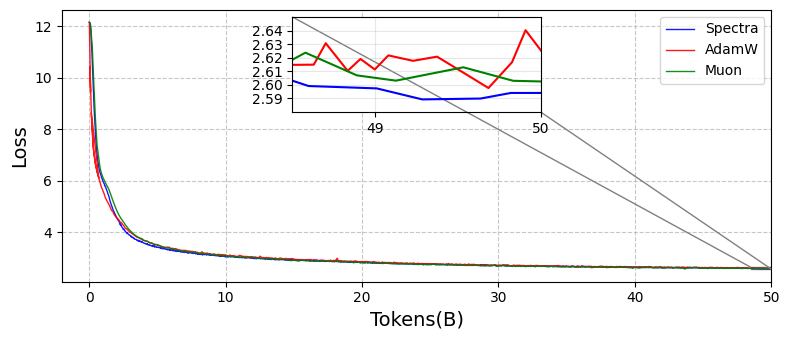} 
        \centerline{(B) LLaMA3-8B}
    \end{minipage}
    \caption{Training loss curves for (A) Qwen3-0.6B on 100B tokens and (B) LLaMA3-8B on 50B tokens.}
    \label{fig:loss_curve}
\end{figure}

\textbf{Downstream Performance.} Table~\ref{tab:Downstream_Combined} summarizes the performance across all benchmarks. Spectra consistently outperforms both AdamW and Muon at both scales. 

On Qwen3-0.6B, Spectra improves average accuracy by +1.41\% over AdamW and +0.89\% over Muon. On LLaMA3-8B, Spectra also achieves the best average accuracy, improving over AdamW and Muon by +1.62\% and +0.66\%, respectively. Together, these gains persist as we scale from 0.6B to 8B.

\textbf{Computational Efficiency.} We compare Spectra with AdamW, Muon, and Dion~\cite{ahn2025dion} on Qwen3-0.6B using NVIDIA H200 GPUs. Table~\ref{tab:efficiency} reports the step time and peak memory. Relative to AdamW, Spectra is 0.7\% faster in per-step wall-clock time, reduces optimizer state memory by 49.25\%, and lowers peak end-to-end memory by 3.63\%. Relative to Muon, Spectra is $5.1\times$ faster in optimizer processing time per step and reduces peak end-to-end memory by 2.17\%. Relative to Dion, Spectra achieves comparable step time (0.21\% faster) while reducing peak end-to-end memory by 21.7\%.

\begin{table*}[t]
  \caption{Downstream performance comparison for Qwen3-0.6B trained on 100B and LLaMA3-8B trained on 50B tokens. Spectra consistently achieves the highest average accuracy across both model scales.}
  \label{tab:Downstream_Combined}
  \centering
  \small
  \setlength{\tabcolsep}{4pt} 
  \begin{tabular*}{\textwidth}{@{\extracolsep{\fill}} llccccccccc @{}}
    \toprule
    Model & Optimizer & Loss & ArcC & ArcE & BoolQ & HellaSwag & LAMBADA & PIQA & RACE & Avg \\
    \midrule

    \multirow{4}{*}{Qwen3-0.6B}
      & AdamW   & 2.83 & \textbf{29.95} & 54.34 & 57.00 & 51.21 & 46.87 & 72.09 & 49.41 & 51.55 \\
      & Dion    & 2.86 & 27.82 & 53.41 & 53.12 & 49.25 & 44.75 & 70.84 & 49.41 & 49.80 \\
      & Muon    & 2.81 & 29.61 & 54.50 & 56.91 & \textbf{52.80} & \textbf{48.81} & 72.09 & 49.80 & 52.07 \\
      & Spectra & \textbf{2.77} & 29.61 & \textbf{56.61} & \textbf{60.21} & 52.30 & 48.44 & \textbf{73.18} & \textbf{50.34} & \textbf{52.96} \\
    \midrule

    \multirow{3}{*}{LLaMA3-8B}
      & AdamW   & 2.63 & 38.40 & 65.11 & 57.23 & 61.29 & 48.27 & 75.21 & 49.21 & 56.39 \\
      & Muon    & 2.60 & \textbf{38.77} & 65.94 & 59.84 & 61.43 & 47.97 & 75.13 & \textbf{52.36} & 57.35 \\
      & Spectra & \textbf{2.59} & 38.46 & \textbf{67.11} & \textbf{60.08} & \textbf{62.95} & \textbf{49.17} & \textbf{76.43} & 51.88 & \textbf{58.01} \\
    \bottomrule
  \end{tabular*}
\end{table*}

\begin{table*}[t]
  \caption{Ablation studies on Qwen3-0.6B. We explore different rank ratios ($r$) and power iteration counts ($T$). All models are trained on 50B tokens.}
  \label{tab:ablation}
  \centering
  \small
  \setlength{\tabcolsep}{4pt}
  \renewcommand{\arraystretch}{1.08}

  \begin{tabular*}{\textwidth}{@{\extracolsep{\fill}} ll *{8}{c} @{}}
    \toprule
    Ablation & Setting & ArcC & ArcE & BoolQ & HellaSwag & LAMBADA & PIQA & RACE & Avg \\
    \midrule

    \multirow{4}{*}{Rank Ratio $r$}
      & 1.5\% (Default) & \textbf{28.92} & 51.52 & 58.93 & 48.52 & \textbf{46.34} & 71.33 & 49.49 & 50.72 \\
      & 5\%             & 26.37 & 52.40 & \textbf{61.31} & 48.89 & 45.22 & 70.95 & 49.57 & 50.67 \\
      & 10\%            & 28.24 & 53.07 & 59.60 & 49.12 & 45.72 & 70.73 & 49.33 & 50.83 \\
      & 15\%            & 27.99 & \textbf{53.45} & 57.83 & \textbf{49.45} & 46.28 & \textbf{71.55} & \textbf{49.86} & \textbf{50.92} \\
    \midrule

    \multirow{4}{*}{Power Iter $T$}
      & $T=1$ (Default) & \textbf{28.92} & 51.52 & \textbf{58.93} & 48.52 & \textbf{46.34} & \textbf{71.33} & 49.49 & \textbf{50.72} \\
      & $T=2$           & 27.73 & \textbf{52.27} & 56.15 & \textbf{48.96} & 45.20 & 70.35 & \textbf{50.18} & 50.12 \\
      & $T=4$           & 27.47 & 52.19 & 53.82 & 48.76 & 45.39 & 70.62 & 50.04 & 49.76 \\
      & $T=8$           & 28.41 & 52.15 & 50.52 & 47.75 & 45.74 & 69.59 & 50.00 & 49.17 \\
    \bottomrule
  \end{tabular*}
\end{table*}

\subsection{Ablation Study}
\label{sec:ablation}

To further quantify the factors contributing to the effectiveness of Spectra, we conduct a series of ablation studies on the Qwen3-0.6B model trained with 50B tokens. We investigate the impact of the rank ratio for head compression, the number of power iterations, and the optimizer's sensitivity to learning rates. The results across downstream tasks are summarized in Table~\ref{tab:ablation} and Figure~\ref{fig:lr_comparison}.

\textbf{Sensitivity to Rank Ratio ($r$).} 
We vary the rank ratio $r\in\{1.5\%,5\%,10\%,15\%\}$ used for spike compression. As shown in Table~\ref{tab:ablation} and Figure~\ref{fig:lr_comparison}, downstream performance is largely insensitive to $r$: the average accuracy changes by less than $0.25$ points across the full range, and individual tasks exhibit only small fluctuations. This suggests that even a minimal rank ratio ($r=1.5\%$) already captures the dominant spike directions, and increasing $r$ provides limited practical benefit. We therefore use $r=1.5\%$ by default to minimize overhead.

\textbf{Number of Power Iterations ($T$).} 
We ablate the number of power-iteration steps $T \in \{1,2,4,8\}$ used for subspace estimation. As shown in Table~\ref{tab:ablation}, increasing $T$ does not improve the overall downstream average and can slightly degrade performance, with the drop largely driven by BoolQ; other tasks exhibit only minor variations. Since $T=1$ also has the lowest compute overhead, we adopt a single cached power-iteration step by default.

\begin{figure}[ht]
    \centering
    \begin{minipage}{0.48\textwidth}
        \centering
        \includegraphics[width=\textwidth]{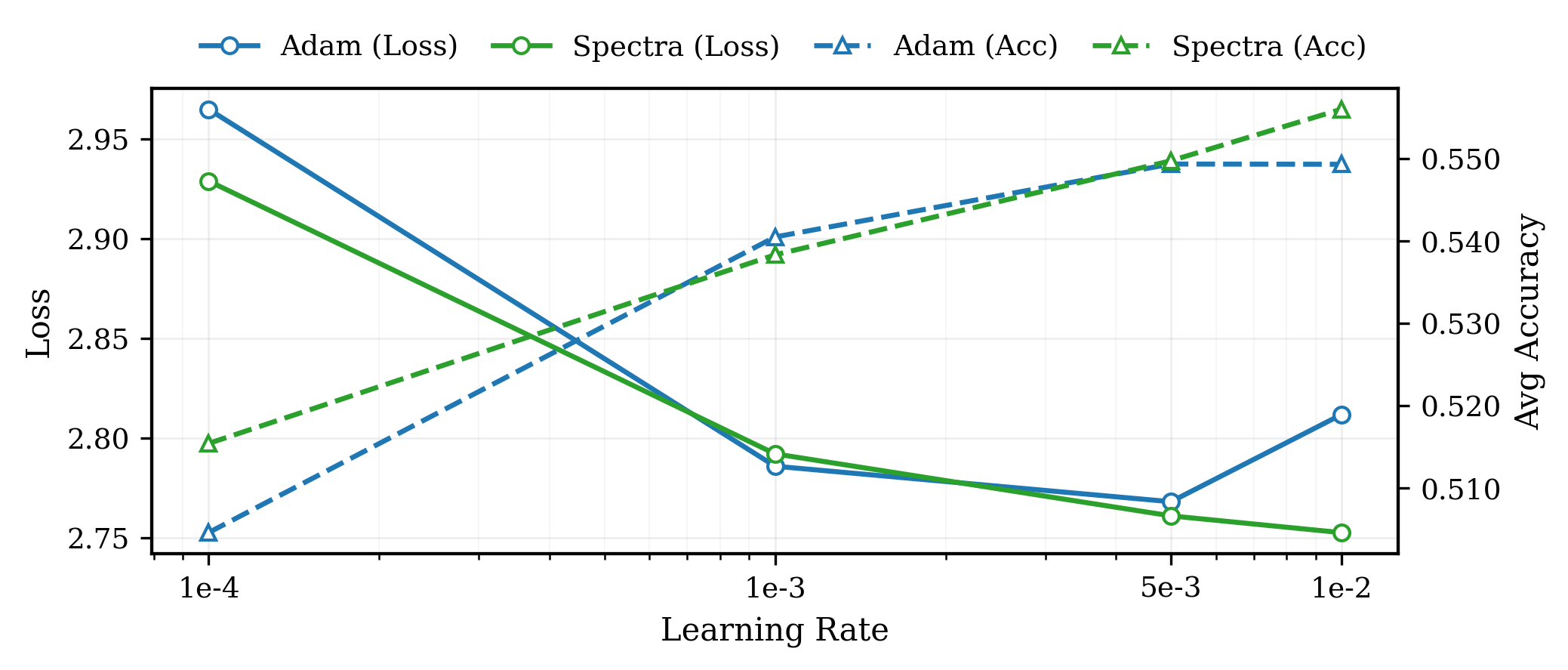} 
    \end{minipage}
    \caption{Comparison of Spectra and AdamW across learning rates $\eta \in \{8\times10^{-4}, 1\times10^{-3}, 5\times10^{-3}, 1\times10^{-2}\}$. Spectra shows superior convergence loss and downstream performance in most regimes.}
    \label{fig:lr_comparison}
\end{figure}

\textbf{Learning Rate Robustness.} 
We sweep the learning rate $\eta\in\{8\times10^{-4},\,1\times10^{-3},\,5\times10^{-3},\,1\times10^{-2}\}$ and compare against AdamW. Figure~\ref{fig:lr_comparison} shows that Spectra remains stable across this wide range and achieves better loss/accuracy at most settings. At $\eta=1\times10^{-2}$, where AdamW exhibits signs of instability, Spectra maintains robust convergence, indicating improved tolerance to larger step sizes.

\draft{
\section{Related Work}

\textbf{Element-wise Optimization.} AdamW~\cite{loshchilov2017decoupled} and related adaptive methods maintain per-parameter moment estimates and apply coordinate-wise rescaling. These methods operator  at the level of individual scalars and therefore does not explicitly exploit correlations within structured parameter blocks (e.g., weight matrices).

\textbf{Matrix-aware Preconditioning.} A line of work introduces matrix-structured preconditioners to capture cross-coordinate correlations. Shampoo~\cite{gupta2018shampoo} and SOAP~\cite{vyas2024soap} approximate second-order information using Kronecker-factored statistics, but their additional memory and compute can be substantial, typically scaling with $\mathcal{O}(m^2+n^2)$ for a matrix of shape $m\times n$.

\textbf{Orthogonal Update Methods.} Another approach performs updates through spectral transforms that normalize or orthogonalize matrix updates. Muon~\cite{jordan6muon} and Dion~\cite{ahn2025dion} use iterative procedures to obtain orthogonalized updates. PolarGrad~\cite{lau2025polargrad} further argues that the key advantage of such methods is to mitigate gradient anisotropy, but its reliance on full polar decomposition makes it expensive for large-scale pretraining.
}

\section{Related Work}

\textbf{Element-wise adaptive optimization.}
Adaptive methods such as AdamW \cite{loshchilov2017decoupled} rescale updates using per-parameter moment statistics and are widely used for LLM training. However, operating at the coordinate level, they ignore structured correlations within matrix-valued parameters. When gradient energy concentrates in a low-rank subspace, element-wise normalization becomes dominated by these directions, suppressing long-tail updates. Spectra differs by explicitly operating in the spectral domain and directly targeting this low-rank anisotropy.

\textbf{Matrix-aware preconditioning.}
Matrix-structured optimizers, including Shampoo \cite{gupta2018shampoo} and SOAP \cite{vyas2024soap}, capture cross-coordinate correlations via Kronecker-factored or second-order statistics. While effective, their memory and computational costs scale poorly with layer dimensions, limiting practicality for large-scale LLM pretraining. Spectra avoids approximating full second-order structure by exploiting the empirical low-rank concentration of gradients, enabling efficient low-rank spectral shaping with minimal overhead.

\textbf{Orthogonal and spectral update methods.}
Recent methods such as Muon \cite{jordan6muon}, Dion \cite{ahn2025dion}, and PolarGrad \cite{lau2025polargrad} reduce gradient anisotropy through orthogonalization or spectrum flattening. These approaches typically apply global spectral transformations, which can amplify noise-dominated small-singular directions and incur substantial numerical cost. In contrast, Spectra performs a localized intervention  suppressing the dominant spike subspace without amplifying the spectral tail.


\section{Conclusion}
\label{sec:conclusion}


We identify a persistent spike–tail structure in LLM gradients, where a small low-rank subspace dominates optimization and suppresses learning in the long tail. To address this asymmetry, we propose Spectra, a spike-aware optimizer that selectively suppresses dominant spectral components without amplifying noise-sensitive tail directions. By reshaping the gradient spectrum in this targeted manner, Spectra yields more benign conditioning in practice, enabling larger stable learning rates, faster convergence, and improved downstream performance with minimal computational and memory overhead. 
These results suggest that viewing gradients as structured spectral objects, rather than independent coordinates, offers a principled basis for scalable and robust LLM optimization, and that selectively targeting dominant structure can improve both stability and efficiency in practice.

\newpage
\section*{Impact Statement}
This paper presents work whose goal is to advance the field of machine learning. There are many potential societal consequences of our work, none of which we feel must be specifically highlighted here.



\newpage
\appendix
\onecolumn

\section{Appendix.}

\subsection{Gradient Anisotropy}
\label{appendix:gradient-anisotropy}
To complement Figure~\ref{fig:gradient-anisotropy} (deepest MLP), we report the gradient singular spectrum for three additional parameter matrices: (i) the shallowest attention $k$-projection, (ii) the shallowest MLP up-projection, and (iii) the deepest attention $k$-projection. Each plot overlays spectra at initialization and at convergence, and includes four Qwen3 model scales.

\begin{figure*}[ht]
    \centering
    \includegraphics[width=0.98\textwidth]{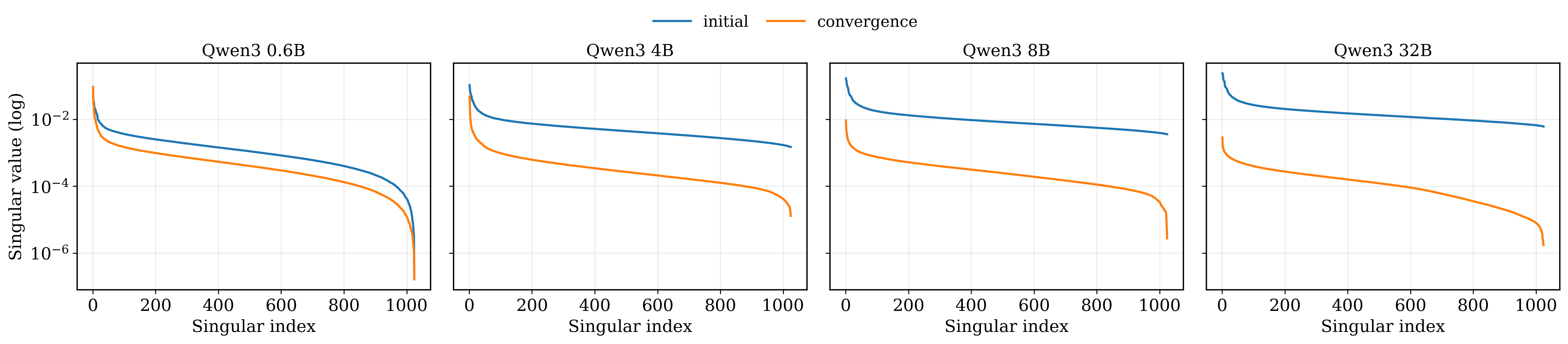}
    \caption{Gradient spectrum (initial vs. convergence) for the shallowest attention layer (self-attention $k$-projection) across four Qwen3 model scales.}
    \label{fig:grad_aniso_appendix_shallow_attn}
\end{figure*}

\begin{figure*}[ht]
    \centering
    \includegraphics[width=0.98\textwidth]{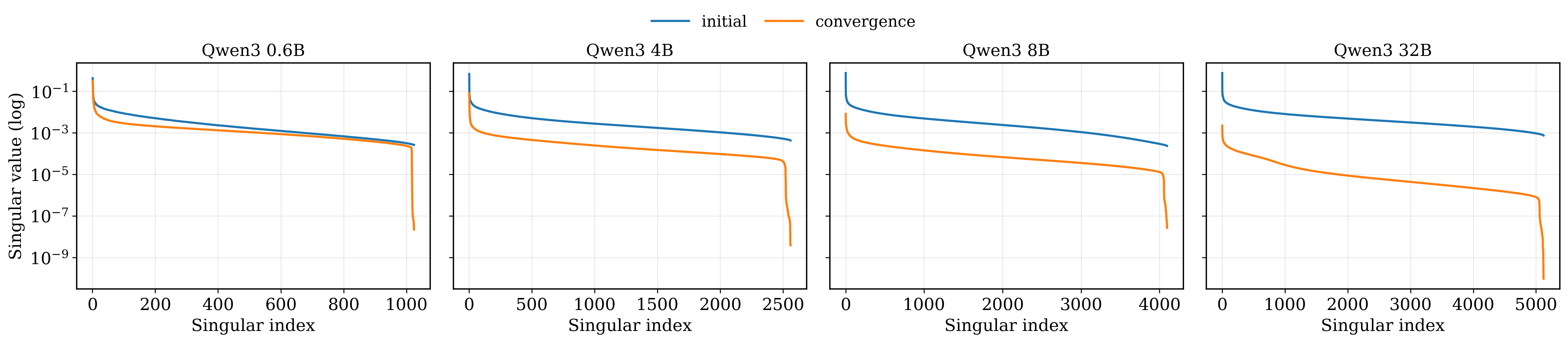}
    \caption{Gradient spectrum (initial vs. convergence) for the shallowest MLP layer (up-projection) across four Qwen3 model scales.}
    \label{fig:grad_aniso_appendix_shallow_mlp}
\end{figure*}

\begin{figure*}[ht]
    \centering
    \includegraphics[width=0.98\textwidth]{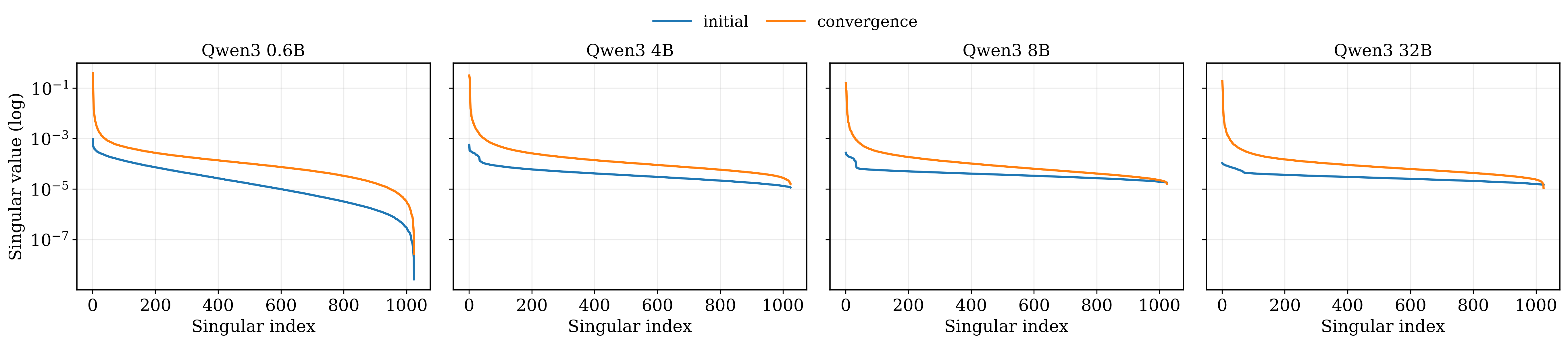}
    \caption{Gradient spectrum (initial vs. convergence) for a deeper attention layer (self-attention $k$-projection) across four Qwen3 model scales.}
    \label{fig:grad_aniso_appendix_deep_attn}
\end{figure*}

\subsection{Semantic Correspondence of Gradient Anisotropy}
\label{appendix:semantic-correspondence}

Figure~\ref{fig:spike-semantic-llama3-8b} reports the controlled-intervention results on LLaMA3-8B, matching the description in Section~\ref{sec:analysis}. Frequency-normalized loss (\emph{FreqNorm}) selectively suppresses the leading spike components, while intra-sentence token permutation (\emph{Shuffle}) selectively amplifies them; in both cases, changes rapidly vanish in the tail.

\begin{figure}[t]
    \captionsetup{skip=1pt}
    \centering
    \begin{minipage}{0.49\linewidth}
        \centering
        \includegraphics[width=\linewidth]{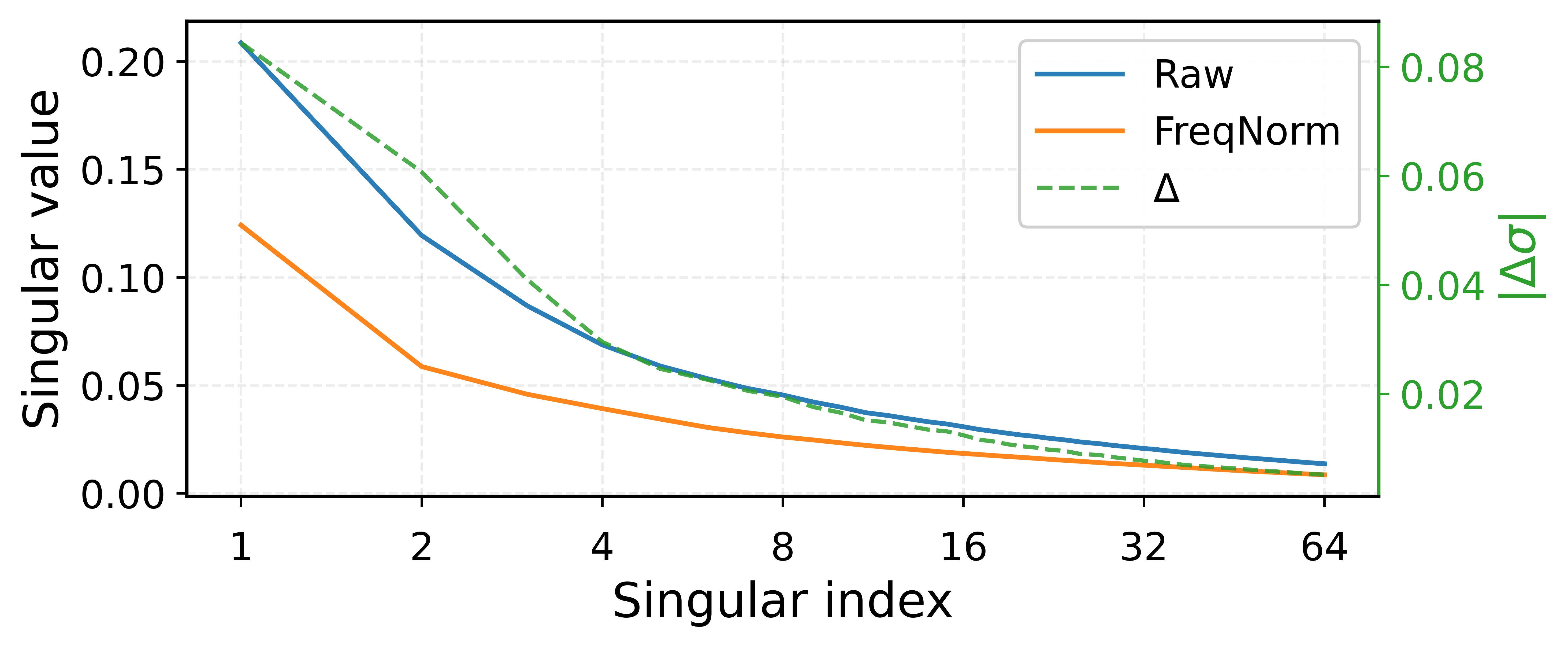}
    \end{minipage}
    \hfill
    \begin{minipage}{0.49\linewidth}
        \centering
        \includegraphics[width=\linewidth]{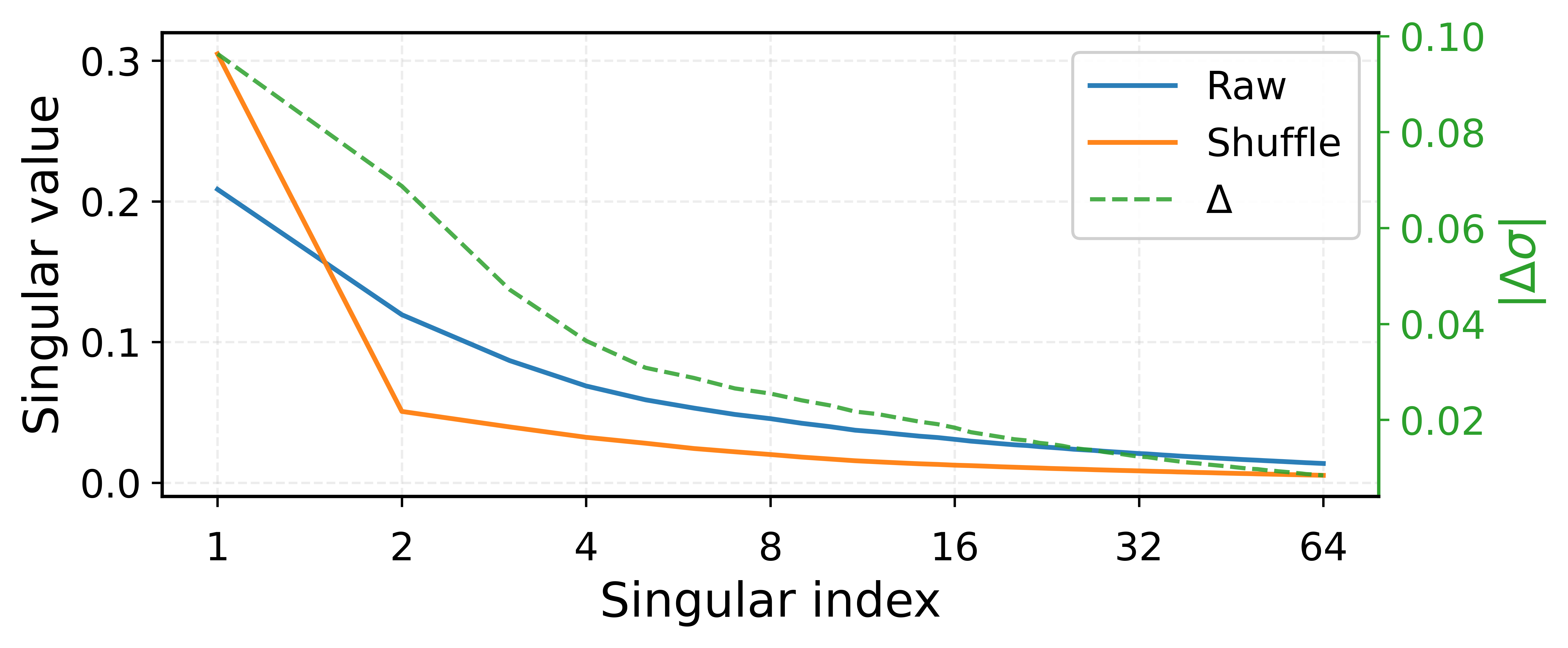}
    \end{minipage}
    \caption{Gradient spectrum under two controlled interventions on LLaMA3-8B. \textit{Left:} frequency-normalized loss (\emph{FreqNorm}) selectively suppresses the leading spike components. \textit{Right:} intra-sentence token permutation (\emph{Shuffle}) selectively amplifies them; in both cases, changes rapidly vanish in the tail.}
    \label{fig:spike-semantic-llama3-8b}
    \vspace{-1\baselineskip}
\end{figure}

\subsection{Spike Updating Suppresses Long-Tail Learning on LLaMA3-8B}
\label{appendix:spike-suppress-tail-8b}
Figure~\ref{fig:spike-suppress-tail-8b} reports the same analysis as Figure~\ref{fig:spike-suppress-tail}, but on LLaMA3-8B. We show (top) the cumulative spectral energy (CDF) of AdamW moments and (bottom) the distribution of tail updates under full normalization versus a tail-only baseline.

\begin{figure}[ht]
    \centering
    \begin{minipage}{0.49\linewidth}
        \centering
        \includegraphics[width=\linewidth]{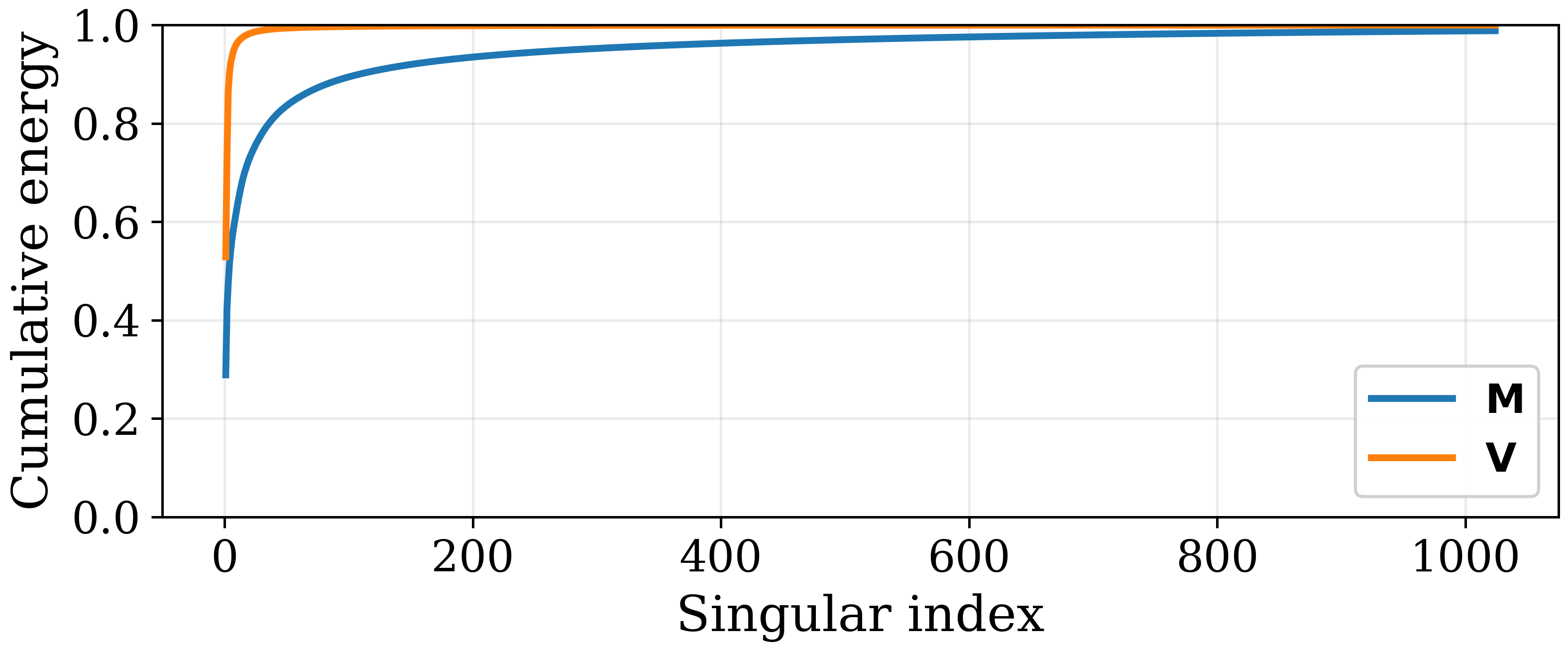}
    \end{minipage}
    \hfill
    \begin{minipage}{0.49\linewidth}
        \centering
        \includegraphics[width=\linewidth]{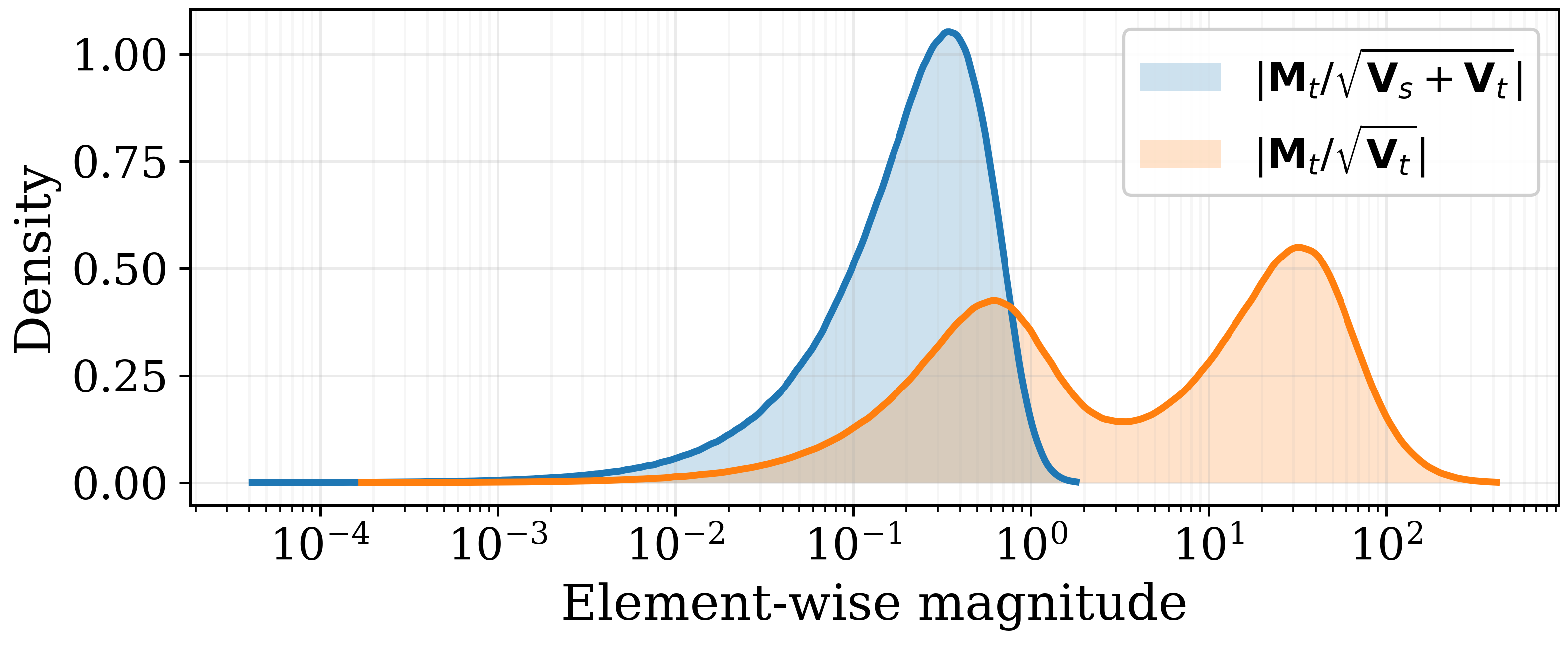}
    \end{minipage}
    \caption{{Spike-dominated second-moment accumulation suppresses tail updates (LLaMA3-8B).} \textit{Left:} cumulative spectral energy (CDF) of AdamW moments. \textit{Right:} element-wise magnitudes of tail updates under full normalization versus the tail-only baseline.}
    \label{fig:spike-suppress-tail-8b}
\end{figure}

\subsection{Proof of Theorem~\ref{thm:spike_lr_bound}}
\label{appendix:proof_spike_lr_bound}

\begin{proof}[Proof of Theorem~\ref{thm:spike_lr_bound}]
Let $\mathbf{w}\triangleq \mathrm{vec}(\mathbf{W})$ and let the mini-batch gradient be the random vector
$\mathbf{g}\triangleq \mathrm{vec}(\mathbf{G})$ with mean $\bar{\mathbf{g}}=\mathbb{E}[\mathbf{g}]$ and covariance
$\mathrm{Cov}(\mathbf{g})=\mathbf{\Sigma}/B$.
Consider the SGD update $\mathbf{w}^+=\mathbf{w}-\eta\,\mathbf{g}$ and denote the Hessian at the current iterate by
$\mathbf{H}\triangleq \nabla^2 L(\mathbf{w})$.

\paragraph{Step 1: second-order surrogate of $\mathbb{E}[L(\mathbf{w}^+)]$.}
Using the second-order Taylor expansion of $L$ around $\mathbf{w}$, we have
\begin{equation}
L(\mathbf{w}-\eta\mathbf{g})
~\approx~
L(\mathbf{w})
-\eta\,\nabla L(\mathbf{w})^\top \mathbf{g}
+\frac{1}{2}\eta^2\,\mathbf{g}^\top \mathbf{H}\mathbf{g}.
\label{eq:taylor_w_eta_g}
\end{equation}
Taking expectation over the mini-batch randomness and noting $\nabla L(\mathbf{w})=\bar{\mathbf{g}}$, we obtain
\begin{align}
\mathbb{E}\!\left[L(\mathbf{w}-\eta\mathbf{g})\right]
&\approx
L(\mathbf{w})
-\eta\,\bar{\mathbf{g}}^\top \mathbb{E}[\mathbf{g}]
+\frac{1}{2}\eta^2\,\mathbb{E}\!\left[\mathbf{g}^\top \mathbf{H}\mathbf{g}\right]
\nonumber\\
&=
L(\mathbf{w})
-\eta\,\|\bar{\mathbf{g}}\|_2^2
+\frac{1}{2}\eta^2\,\mathbb{E}\!\left[\mathbf{g}^\top \mathbf{H}\mathbf{g}\right].
\label{eq:expected_surrogate}
\end{align}
Moreover, for any random vector $\mathbf{g}$ with mean $\bar{\mathbf{g}}$ and covariance $\mathbf{\Sigma}/B$,
\begin{equation}
\mathbb{E}\!\left[\mathbf{g}^\top \mathbf{H}\mathbf{g}\right]
=
\mathrm{tr}\!\left(\mathbf{H}\,\mathbb{E}[\mathbf{g}\mathbf{g}^\top]\right)
=
\mathrm{tr}\!\left(\mathbf{H}\left(\bar{\mathbf{g}}\bar{\mathbf{g}}^\top + \frac{1}{B}\mathbf{\Sigma}\right)\right)
=
\bar{\mathbf{g}}^\top \mathbf{H}\bar{\mathbf{g}}
+\frac{1}{B}\mathrm{tr}(\mathbf{\Sigma}\mathbf{H}).
\label{eq:quad_term_eval}
\end{equation}
Substituting~\eqref{eq:quad_term_eval} into~\eqref{eq:expected_surrogate} yields a quadratic surrogate in $\eta$.

\paragraph{Step 2: minimizing the quadratic gives \eqref{eq:eta_star}.}
Assuming $\mathbf{H}\succeq \mathbf{0}$ and the quadratic coefficient is nonzero, the surrogate is convex in $\eta$.
Its minimizer is
\begin{equation}
\eta^\ast
=
\frac{\|\bar{\mathbf{g}}\|_2^2}{
\bar{\mathbf{g}}^\top \mathbf{H}\bar{\mathbf{g}}+\frac{1}{B}\mathrm{tr}(\mathbf{\Sigma}\mathbf{H})},
\end{equation}
which is exactly~\eqref{eq:eta_star}.

\paragraph{Step 3: spike-restricted variance yields the upper bounds in \eqref{eq:eta_star_spike_upper}.}
Let $\mathbf{\Pi}_k=\sum_{i=1}^k \mathbf{s}_i\mathbf{s}_i^\top$ be the projector onto the spike subspace and
$\mathbf{\Sigma}_s=\mathbf{\Pi}_k\mathbf{\Sigma}\mathbf{\Pi}_k$.
Since $\mathbf{H}\succeq \mathbf{0}$ and $\mathbf{\Pi}_k$ is an orthogonal projector, we have
$\mathbf{0}\preceq \mathbf{\Pi}_k\mathbf{H}\mathbf{\Pi}_k \preceq \mathbf{H}$.
Therefore, with $\mathbf{\Sigma}\succeq \mathbf{0}$,
\begin{equation}
\mathrm{tr}(\mathbf{\Sigma}_s\mathbf{H})
=\mathrm{tr}(\mathbf{\Pi}_k\mathbf{\Sigma}\mathbf{\Pi}_k\mathbf{H})
=\mathrm{tr}(\mathbf{\Sigma}\,\mathbf{\Pi}_k\mathbf{H}\mathbf{\Pi}_k)
\le \mathrm{tr}(\mathbf{\Sigma}\mathbf{H}).
\label{eq:trace_monotone}
\end{equation}
Plugging~\eqref{eq:trace_monotone} into the denominator of~\eqref{eq:eta_star} gives the first inequality in
\eqref{eq:eta_star_spike_upper}.
Finally, dropping the nonnegative term $\bar{\mathbf{g}}^\top \mathbf{H}\bar{\mathbf{g}}\ge 0$ yields the second inequality in
\eqref{eq:eta_star_spike_upper}.

\paragraph{Step 4: the $\mu$-strongly-convex bound \eqref{eq:eta_star_spike_upper_mu}.}
If $\mathbf{H}\succeq \mu \mathbf{I}$ for some $\mu>0$, then
\begin{equation}
\mathrm{tr}(\mathbf{\Sigma}_s\mathbf{H})
\ge
\mathrm{tr}(\mathbf{\Sigma}_s\,\mu\mathbf{I})
=
\mu\,\mathrm{tr}(\mathbf{\Sigma}_s).
\end{equation}
Hence
\begin{equation}
\eta^\ast
\le
\frac{B\,\|\bar{\mathbf{g}}\|_2^2}{\mu\,\mathrm{tr}(\mathbf{\Sigma}_s)}.
\end{equation}
Using $\mathbf{\Sigma}_s=\mathbf{\Pi}_k\mathbf{\Sigma}\mathbf{\Pi}_k$ and $\mathbf{\Pi}_k=\sum_{i=1}^k \mathbf{s}_i\mathbf{s}_i^\top$
with orthonormal $\{\mathbf{s}_i\}_{i=1}^k$, we further have
\begin{equation}
\mathrm{tr}(\mathbf{\Sigma}_s)
=
\mathrm{tr}(\mathbf{\Pi}_k\mathbf{\Sigma})
=
\sum_{i=1}^k \mathbf{s}_i^\top \mathbf{\Sigma}\mathbf{s}_i,
\end{equation}
which gives~\eqref{eq:eta_star_spike_upper_mu} and completes the proof.
\end{proof}

\subsection{Experiment details}
\label{appendix:experiment-details}

The detailed training configurations of Qwen3-0.6B and LLaMA3-8B are shown in Table~\ref{tab:qwen3_config} and Table~\ref{tab:llama3_config}.

\begin{table}[h]
\centering
\caption{Model configurations for Qwen3-0.6B.}
\label{tab:qwen3_config}
\begin{tabular}{lc}
\hline
\textbf{Configurations} & \textbf{Qwen3-0.6B} \\ \hline
Hidden activation & silu \\
Max position embeddings & 40960 \\
Vocabulary size & 151936 \\
Sequence length & 1024 \\
LayerNorm $\epsilon$ (rms\_norm\_eps) & $1 \times 10^{-6}$ \\
Dropout probability (attention\_dropout) & 0.0 \\
LR warm-up steps & 2,000 \\
LR scheduler & Cosine \\
Hidden size & 1024 \\
Intermediate size & 3072 \\
Hidden layers & 28 \\
Num. attention heads & 16 \\ 
Batch Size & 512 \\ \hline
\end{tabular}
\end{table}

\begin{table}[h]
\centering
\caption{Model configurations for LLaMA3-8B.}
\label{tab:llama3_config}
\begin{tabular}{lc}
\hline
\textbf{Configurations} & \textbf{LLaMA3-8B} \\ \hline
Hidden activation & silu \\
Max position embeddings & 8192 \\
Vocabulary size & 128256 \\
Sequence length & 4,096 \\
LayerNorm $\epsilon$ (rms\_norm\_eps) & $1 \times 10^{-5}$ \\
Dropout probability (attention\_dropout) & 0.0 \\
LR warm-up steps & 7629 \\
LR scheduler & Cosine \\
Hidden size & 4096 \\
Intermediate size & 14336 \\
Hidden layers & 32 \\
Num. attention heads & 32 \\ 
Batch Size & 512 \\ \hline
\end{tabular}
\end{table}


\end{document}